\theoremstyle{plain}
\newtheorem{theorem}{Theorem}[section]
\newtheorem{proposition}[theorem]{Proposition}
\newtheorem{lemma}[theorem]{Lemma}
\newtheorem{corollary}[theorem]{Corollary}
\theoremstyle{definition}
\newtheorem{definition}[theorem]{Definition}
\newtheorem{assumption}[theorem]{Assumption}
\theoremstyle{remark}
\newtheorem{remark}[theorem]{Remark}
\newcommand{\name}{PAIR}
\newcommand{\hide}[1]{}
\newcommand{\yf}[1]{#1}
\newcommand{\yw}[1]{\textcolor{red}{[Yi: #1]}}
\icmltitlerunning{Phasic Self-Imitative Reduction for Sparse-Reward Goal-Conditioned Reinforcement Learning}
\begin{document}

\twocolumn[
\icmltitle{Phasic Self-Imitative Reduction for \\
           Sparse-Reward Goal-Conditioned Reinforcement Learning}



\icmlsetsymbol{equal}{*}

\begin{icmlauthorlist}
\icmlauthor{Yunfei Li}{equal,iiis}
\icmlauthor{Tian Gao}{equal,iiis}
\icmlauthor{Jiaqi Yang}{ucb}
\icmlauthor{Huazhe Xu}{stanford}
\icmlauthor{Yi Wu}{iiis,sqz}
\end{icmlauthorlist}

\icmlaffiliation{iiis}{Institute for Interdisciplinary Information Sciences, Tsinghua University, Beijing, China}
\icmlaffiliation{ucb}{Department of Electrical Engineering and Computer Sciences, University of California, Berkeley, CA, USA}
\icmlaffiliation{stanford}{Stanford University, CA, USA}
\icmlaffiliation{sqz}{Shanghai Qi Zhi Institute, Shanghai, China}

\icmlcorrespondingauthor{Yunfei Li}{liyf20@mails.tsinghua.edu.cn}
\icmlcorrespondingauthor{Yi Wu}{jxwuyi@gmail.com}

\icmlkeywords{Machine Learning, ICML}

\vskip 0.3in
]



\printAffiliationsAndNotice{\icmlEqualContribution} 

\begin{abstract}
It has been a recent trend to leverage the power of supervised learning (SL) towards more effective reinforcement learning (RL) methods. 
We propose a novel phasic approach by alternating online RL and offline SL for tackling sparse-reward goal-conditioned problems.  In the online phase, we perform RL training and collect rollout data while in the offline phase, we perform SL on those successful trajectories from the dataset. To further improve sample efficiency, we adopt additional techniques in the online phase including task reduction to generate more feasible trajectories and a value-
difference-based intrinsic reward to alleviate the sparse-reward issue. We call this overall algorithm, \emph{\underline{P}h\underline{A}sic self-\underline{I}mitative \underline{R}eduction} (\name). 
{\name} substantially outperforms both non-phasic RL and phasic SL baselines on sparse-reward goal-conditioned robotic control problems, including a challenging stacking task. {\name} is the first RL method that learns to stack \emph{6 cubes} with only \emph{0/1 success rewards} from scratch.

\end{abstract}

\section{Introduction}

Despite great advances achieved by deep reinforcement learning (RL) in a wide range of application domains such as playing games~\cite{mnih2015human,schrittwieser2020mastering}, controlling robots~\cite{lillicrap2016continuous,hwangbo2019learning,akkaya2019solving}, and solving scientific problems~\cite{jeon2020autonomous}, deep RL methods have been empirically shown to be brittle and extremely sensitive to hyper-parameter tuning~\cite{tucker2018mirage,Ilyas2020A,Engstrom2020Implementation,yu2021surprising,andrychowicz2021what}, which largely limits the practice use of deep RL in many real-world scenarios. On the contrary, supervised learning (SL) provides another learning paradigm by imitating given demonstrations, which is much simpler for tuning and typically results in a much more steady optimization process~\cite{lynch2020learning,ghosh2020learning}. 
Inspired by the success of training powerful fundamental models by SL~\cite{larochelle20language,dosovitskiy2020image,jumper2021highly}, it has also been a recent trend in RL to leverage the power of SL to develop more powerful and stable deep RL algorithms~\cite{levine21understanding}.

\begin{figure}[t]
    \centering
    \includegraphics[width=1.0\linewidth]{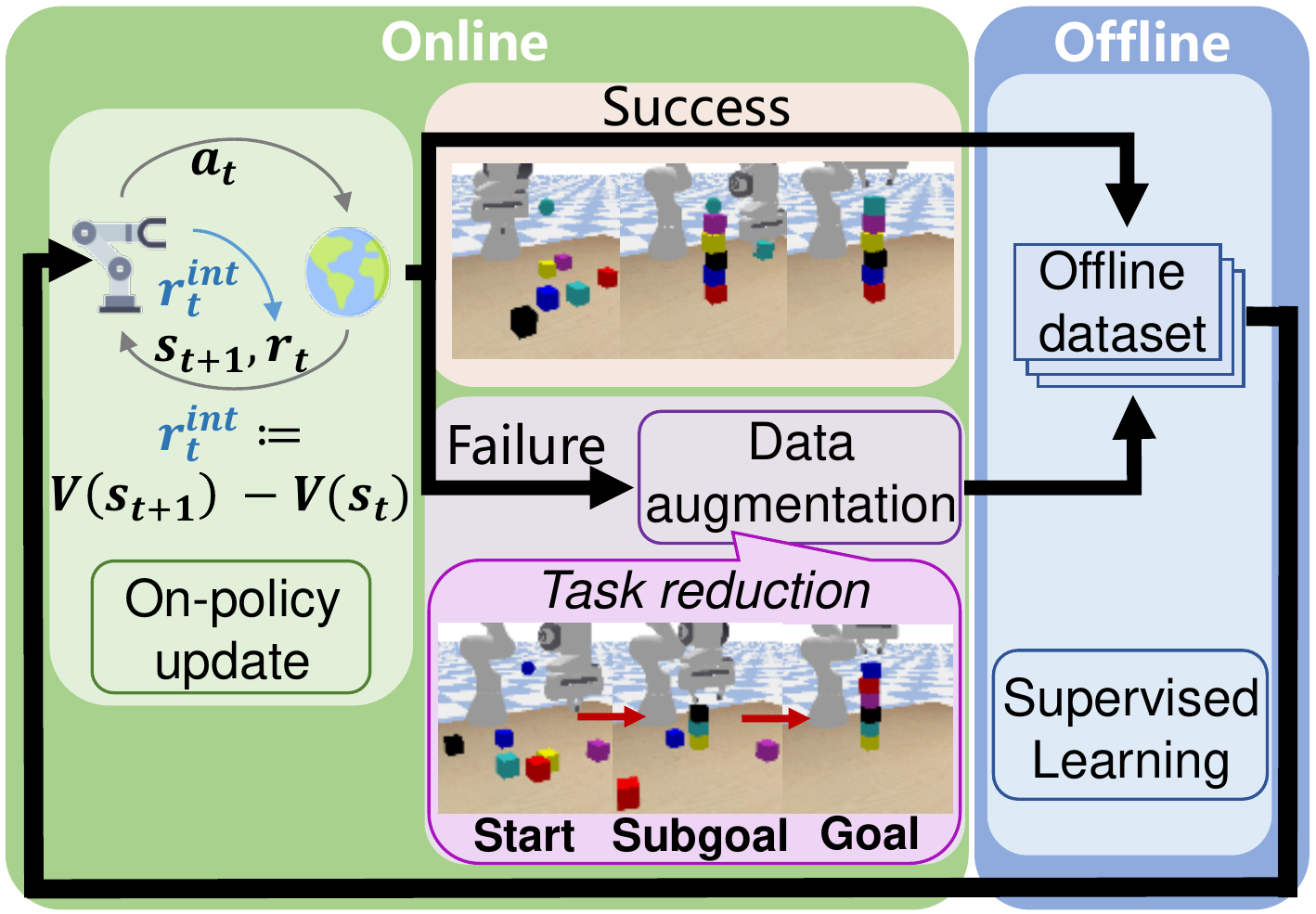}
    \caption{Overall workflow of {\name}. {\name} iteratively alternates between online RL and offline SL phases. During online phases, the agent is trained using both environment reward and a value-difference-based intrinsic reward. Meantime, it collects trajectories and augments them with task reduction into successful demonstrations for offline training. In the offline phases, the agent runs supervised learning to improve its policy using the generated dataset.}
    \vspace{-2mm}
    \label{fig:teaser}
\end{figure}

One representative line of research that incorporates SL into RL is offline RL, which assumes that a large offline dataset of transition data is available and solely performs learning on the dataset without interacting with the environment~\cite{lange2012batch,wu2019behavior,levine2020offline,kumar20cql,fujimoto2021minimalist}.
However, both empirical~\cite{yu2021conservative} and theoretical~\cite{rashidinejad2021bridging} evidence suggests that the success of recent offline RL methods rely on the quality of the dataset. Accordingly, popular offline RL datasets are typically constructed by human experts, which may not be feasible for many real-world problems. 
Besides, offline RL suffers from a final performance gap compared with online RL algorithms.



Another representative line of research is self-imitation learning (SIL)~\cite{oh2018self}, which combines SL and RL in a completely online fashion. SIL directly performs SL over selected self-generated rollout trajectories by treating the SL objective as an auxiliary loss and optimizing it jointly with the standard RL objective. 
SIL does not require a dataset in advance. However, optimizing a mixed objective consisting of both RL and SL makes the optimization process even more brittle and requires substantial efforts of parameter tuning to achieve best empirical performance. 

In this paper, we propose a simple phasic approach, \emph{\underline{P}h\underline{A}sic self-\underline{I}mitative \underline{R}eduction} (\name), to effectively balance both RL and SL training for goal-conditioned sparse-reward problems. The main principle of {\name} is to alternate between RL and SL: in the RL phase, we solely perform standard online RL training for optimization stability and collect rollout trajectories as a dataset for the offline SL phase; while in the SL phase, we pick out successful trajectories as SL signals and run imitation learning to improve policy.
To improve sample efficiency, {\name} also includes two additional techniques in the RL phase: 1) a value-difference-based intrinsic reward that alleviates the sparse-reward issue, and 2) \emph{task reduction}, a data augmentation technique that largely increases the total number of successful trajectories, especially for hard compositional tasks. 
Our theoretical analysis suggests that task reduction can converge \emph{exponentially} faster than the vanilla phasic approach that does not use task reduction. 

We implement {\name} with Proximal Policy Optimization (PPO) for RL and behavior cloning for SL and we conduct experiments on a variety of goal-conditioned control problems, including relatively simple benchmarks such as pushing and ant-maze navigation, and a challenging sparse-reward cube-stacking task. {\name} substantially outperforms all the baselines including non-phasic online methods, which jointly perform self-imitation and RL training, and phasic SL methods, which only perform supervised learning on self-generated trajectories~\cite{ghosh2020learning}. 
We highlight that {\name} successfully learns to stack 6 cubes with 0/1 rewards from scratch. To our knowledge, {\name} is the \emph{first} deep RL method that could solve this challenging sparse reward task.

\section{Related Work}
\textbf{Goal-conditioned RL:}
We study goal-conditioned reinforcement learning~\cite{kaelbling1993learning} with sparse reward in this work. Goal-conditioned RL enables one agent to solve a variety of tasks by predicting actions given both observations and goals, and is studied in a number of works~\cite{schaul2015universal,nair2018visual,pong2018temporal,veeriah2018many,zhao2019maximum,eysenbach2020c}. Although some techniques like relabeling~\cite{andrychowicz2017hindsight} are proposed to address the sparse reward issue when learning goal-conditioned policies, there are still challenges in long-horizon problems~\cite{nasiriany2019planning}.

\textbf{Offline reinforcement learning:}
Offline RL~\cite{lange2012batch,levine2020offline} is a popular line of research that incorporates SL into RL, which studies extracting a good policy from a fixed transition dataset. A large portion of offline RL methods focus on regularized dynamic programming (e.g., Q-learning)~\cite{kumar20cql,fujimoto2021minimalist}, with the constraint that the resulting policy does not deviate too much from the behavior policy in the dataset~\cite{wu2019behavior,peng2019advantage,kostrikov2021iql}. Some other works directly treat policy learning as \yf{a supervised learning} 
problem, and learn the policy in a conditioned behavior cloning manner~\cite{chen2021decision,janner2021offline,furuta2021generalized,emmons2021rvs}, \yf{which can be considered as special cases of upside down RL and reward-conditioned policies~\cite{schmidhuber2019reinforcement,kumar2019reward}}. There are also methods that learn transition dynamics from offline data before 
extracting policies in a model-based way~\cite{matsushima2020deployment,kidambi2020morel}. Some works also consider a single online fine-tuning phase after offline learning~\cite{nair2020awac,lu2021awopt,mao2022moore,uchendu2022jump} while we repeatedly alternate between offline and online training.

\textbf{Imitation learning in RL:} Imitation learning is a framework for learning policies from demonstrations, which has been shown to largely improve the sample complexity of RL methods~\cite{hester2018deep,rajeswaran18dexterous} and help overcome exploration issues~\cite{nair2018overcoming}. Self-imitation learning (SIL)~\cite{oh2018self}, which imitates good data rolled out by the RL agent itself, does not require any external dataset and  has been shown to help exploration in sparse reward tasks. Our method also performs imitation learning over self-generated data. 
Self-imitation objective is optimized jointly with the RL objective, while we propose to perform SL and RL separately in two phases. The idea of substituting joint optimization with iterative training for minimal interference between different objectives is similar to phasic policy gradient~\cite{cobbe21ppg}. Goal-conditioned supervised learning (GCSL)~\cite{ghosh2020learning} is perhaps the most related work to ours. GCSL repeatedly performs imitation learning on self collected relabeled data without any RL updates while we leverage the power from both RL and SL and adopt further task reduction for enhanced data augmentation.




\textbf{Sparse reward problems:}
There are orthogonal interests in solving long horizon sparse reward with hierarchical modeling~\cite{stolle2002learning,kulkarni2016hierarchical,bacon2017option,nachum2018data} or by encouraging exploration~\cite{singh2005intrinsically,burda2018exploration,Badia2020Never,ecoffet2021first}. 
Our framework can be also viewed as an effective solution to tackle challenging sparse-reward compositional problems.

\section{Preliminary}

We consider the setting of goal-conditioned Markov decision process with 0/1 sparse rewards defined by $(\mathcal{S}, \mathcal{A}, P(s'|s,a), \mathcal{G}, r(s,a,g), \rho_0, \gamma)$. 
$\mathcal{S}$ is the state space, $\mathcal{A}$ is the action space, $\mathcal{G}$ is the goal space and $\gamma$ is the discounted factor. $P(s'|s,a)$ denotes transition probability from state $s$ to $s'$ after taking action $a$. The reward function $r(s,a,g)$ is 1 only if the goal $g$ is reached at the current state $s$ within some precision threshold and otherwise 0. 
In the beginning of each episode, the initial state $s_0$ is sampled from a distribution $\rho_0$ and a goal $g$ is sampled from the goal space. 
An episode terminates when the goal is achieved or it reaches a maximum number of steps. 

The agent is represented as a goal-conditioned stochastic policy $\pi_{\theta}(a|s, g)$ parametrized by $\theta$. The optimal policy $\pi_{\theta^{\star}}$ should maximize the objective $J(\theta)$ defined by the expected discounted cumulative reward over all the goals, i.e., 
\begin{align}\label{eq:rl}
J(\theta) = J(\pi_\theta) =\mathbb{E}_{g\in\mathcal{G},a_t\sim \pi_{\theta}}\left[\sum_t \gamma^t r(s_t, a_t, g)\right].
\end{align}

\textbf{Policy gradient } optimizes $J(\theta)$ via the gradient computation 
\begin{align*}
\nabla J(\theta)=\mathbb{E}_{g,a_t}\left[\sum_t (R_t-V_{\psi}(s_t,g))\nabla\log \pi(a_t|s_t,g)\right],
\end{align*}
where $V_{\psi}(s_t,g)$ is the goal-conditioned value function parameterized by $\psi$ and $R_t$ denotes the discounted return starting from time $t$ on the current trajectory.
Note that in our goal-conditioned setting with 0/1 rewards, $R_t$ will be either\footnote{More precisely, the return $R_t$ will be 0 or close to 1 due to the discount factor $\gamma$.} 0 or 1 and the value function $V_{\psi}(s_t,g)$ can be approximately interpreted as the discounted success rate from state $s_t$ towards goal $g$.


\textbf{Goal-conditioned imitation learning} optimizes a policy $\pi_\theta$ by running supervised learning over a given demonstration dataset $\mathcal{D}=\{\tau:(g;s_0,a_0,s_1,a_1,\ldots)\}$, where $\tau$ denotes a single trajectory. The supervised learning loss $L(\theta)$ is typically defined by
\begin{align}\label{eq:bc}
L(\theta)=-\mathbb{E}_{(g;s,a)\in \mathcal{D}} \left[w(s,a,g)\log \pi(a|s,g)\right],
\end{align}
where $w(s,a,g)$ is some sample weight. Behavior cloning (BC) simply sets $w(s,a,g)$ as $1$ while more advanced methods may have other choices. 
Note that self-imitation learning (SIL) is a paradigm that jointly optimizes the RL objective $J(\theta)$ and the SL objective $L(\theta)$ in an online fashion.

\section{Method}

Our phasic solution {\name} consists of 3 components, including the online RL phase (Sec.~\ref{sec:rl-phase}), which also collects rollout trajectories, task reduction (Sec.~\ref{sec:task-reduction}) as a mean for data augmentation, and the offline phase (Sec.~\ref{sec:offline}), which performs SL on self-generated demonstrations. We summarize the overall algorithm in Sec.~\ref{sec:pair}.

\subsection{RL Phase with Intrinsic Rewards}\label{sec:rl-phase}

The RL phase follows any standard online RL training. Specifically in our work, we adopt PPO~\cite{schulman17ppo} as our RL algorithm, which trains both policy $\pi_\theta$ and value function $V_\psi$ using rollout trajectories.
When a trajectory $\tau=(g;s_t,a_t)$ is successful, i.e., goal $g$ is reached, we keep this trajectory $\tau$ as a positive demonstration towards goal $g$ in the dataset $\mathcal{D}$ for the offline phase.



\begin{figure}
    \centering
    \includegraphics[width=\linewidth]{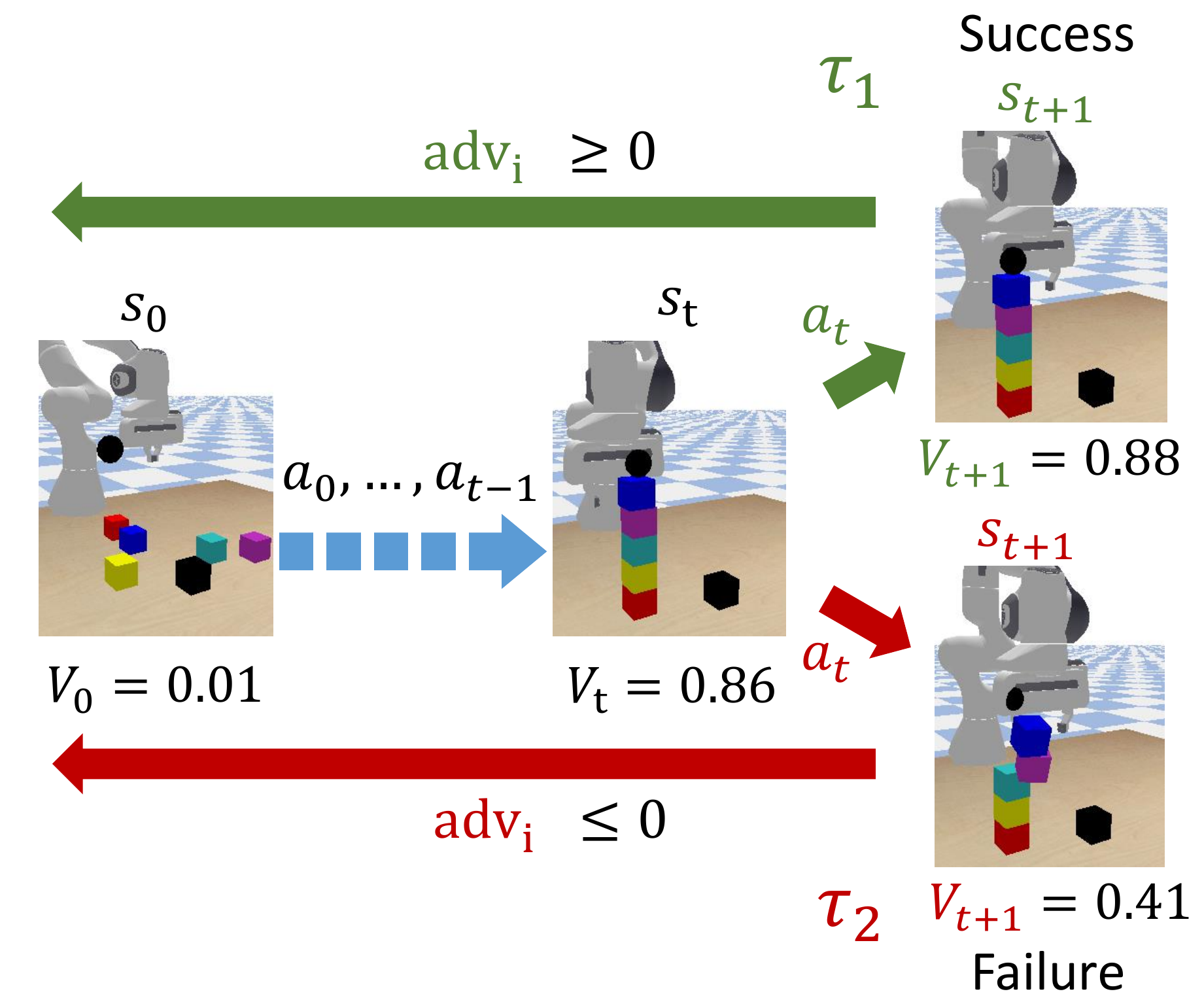}
    \caption{Motivation of value-difference intrinsic reward. Trajectories $\tau_1$ and $\tau_2$ only differ at a single action $a_t$. $\tau_1$ succeeds and will have positive advantage. $\tau_2$ fails since $a_t$ knocks down the cubes, then all the transitions in $\tau_2$ will have negative advantages due to 0/1 goal-conditioned reward, even if early actions are perfect.}
    \vspace{-2mm}
    \label{fig:method:r_int}
\end{figure}

\textbf{Value-difference-based intrinsic rewards:}

The sparse-reward issue is a significant challenge for online RL training, which can yield substantially high variance in gradients.
In particular, let's assume $\gamma=1$ for simplicity and consider a single trajectory $\tau=(g;s_t,a_t)$. In our goal-conditioned setting, the return $R_t$ on $\tau$ will be binary and the value function $V_{\psi}(s_t,g)$ will be approximately between 0 and 1. Hence, the advantage function for state-action pair $(s_t,a_t)\in \tau$, i.e., $R_t-V(s_t)$, will be either positive or negative for the entire trajectory $\tau$.
Imagine a concrete example as illustrated in Fig.~\ref{fig:method:r_int}, where we have two trajectories, a successful trajectory $\tau_1$ and a failed trajectory $\tau_2$. These two trajectories only differ at the final timestep $s_{t+1}$, which may be due to some random exploration noise at action $a_t$. However, only due to a single mistake, all the state-action pairs from $\tau_2$ will have negative advantages, even though most of the actions in $\tau_2$ are indeed approaching the desired target. Likewise, in a successful trajectory, it is also possible that some single action is poor but eventually the subsequent actions fix this early mistake.


Besides the trajectory-based advantage computation, it will be beneficial to have some effective signal of whether a transition $(s_t,a_t,s_{t+1})$ is properly ``\emph{approaching}'' the desired goal $g$ or not.
Our suggestion is to use $V_{\psi}(s,g)$ as an empirical measure: if $a_t$ is a good action, it should lead to a higher state value, i.e., $V_{\psi}(s_{t+1},g)-V_{\psi}(s_t,g)>0$, which indicates that $a_t$ moves to a state with a higher success rate; similarly, a poor action will result in a value drop, i.e., $V_{\psi}(s_{t+1},g)-V_{\psi}(s_t,g)<0$.
Accordingly, we propose to adopt value difference as an intrinsic reward $r^{\textrm{int}}$ for stabilizing goal-conditioned RL training as follows 
\begin{align*} 
    & r^{\textrm{int}}(s_t,a_t,g) := V_{\psi}(s_{t+1}, g) - V_{\psi}(s_t, g).
\end{align*}

We remark that $r^{\textrm{int}}$ relies on an accurately learned value function $V_{\psi}(s,g)$. Therefore, we suggest to only train $V_{\psi}(s,g)$ over the sparse goal-conditioned rewards while using another value head to fit the intrinsic rewards for critic learning. Similar techniques have been previously explored in \cite{burda2018exploration} as well.


\subsection{Task Reduction as Data Augmentation}\label{sec:task-reduction}
\begin{figure}
    \centering
    \includegraphics[width=0.49\linewidth]{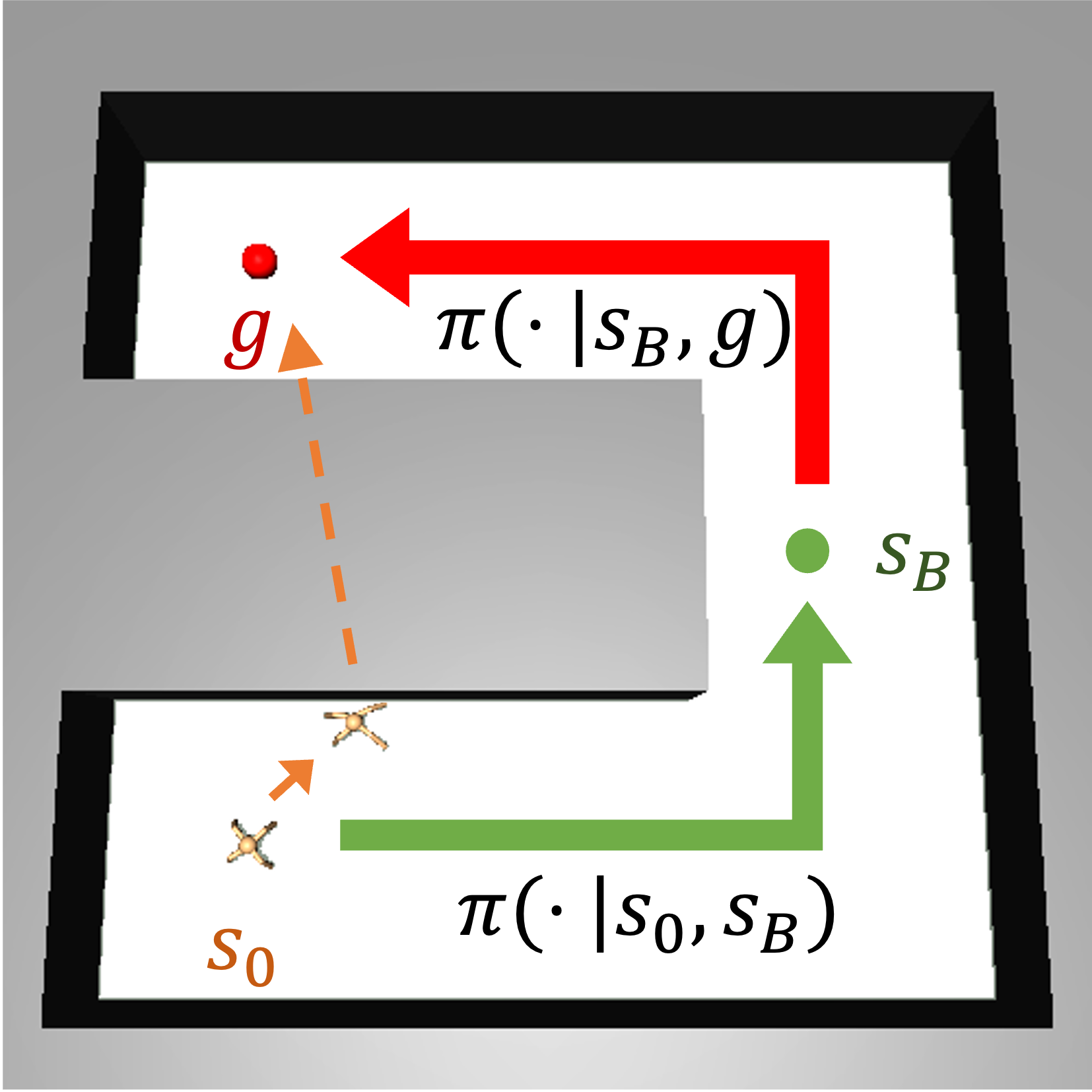}
    \includegraphics[width=0.49\linewidth]{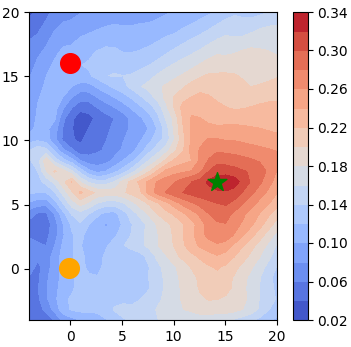}
    \caption{Illustration of task reduction. \emph{(Left)} Given a trajectory that fails to reach $g$ from $s_0$ (orange arrows), task reduction searches for an intermediate $s_B$, then executes the resulting two sub-tasks (green and red arrows) to generate a successful demonstration. \emph{(Right)} Heatmap of composed value in Eq.~\ref{eq:tr} w.r.t. $s_B$. The green star denotes the position of optimal $s_B$.}
    \label{fig:method:tr}
\end{figure}

In order to perform effective supervised learning in the offline phase, it is critical that the online phase can generate as much successful trajectories as possible. 
In our framework, we consider two data augmentation techniques to boost the positive samples in the dataset $\mathcal{D}$, i.e., \emph{goal relabeling} and \emph{task reduction}. We will first describe the simpler one, goal relabeling, before moving to a much more powerful technique, task reduction.

\textbf{Goal relabeling } was originally proposed by \citet{andrychowicz2017hindsight}. In our goal-conditioned learning setting, for each failed trajectory $\tau=(g;s_t, a_t)$ originally targeted at goal $g$, we can create an artificial goal $g'$ by setting $g'=s_j$ for some reached state $s_j\in\tau$, which naturally yields a successful trajectory $\tau'$ as follows:
\begin{equation}
    \tau'\gets(g'=s_{j}; s_{t=0:j}, a_{t=0:j})\quad\textrm{where}\;\; s_j\in\tau.
\end{equation}
Therefore, despite its simplicity, goal relabeling can convert every failed trajectory $\tau$ to a positive demonstration without any further interactions with the environment. 


\textbf{Task reduction} was originally proposed by \citet{li2020solving}. The main idea is to decompose a challenging task into a composition of two simpler sub-tasks so that both sub-tasks can be solved by the current policy. 
As illustrated in the left part of Fig.~\ref{fig:method:tr}, given a goal $g$ from state $s_0$, task reduction searches for the best sub-goal $s_B^\star$ through a 1-step planning process over the universal value function $V_\psi(s,g)$ as follows
\begin{equation}\label{eq:tr}
    s_B^{\star} = \arg\max_{s_B} V_{\psi}(s_0, s_B) \oplus V_{\psi}(s_B, g),
\end{equation}
where $\oplus$ is a composition operator typically implemented as multiplication. Since the value function is learned, such a search process can be accomplished without any further environment interactions by gradient descent or cross-entropy method. 
A heatmap of the composed value for sub-goal search (Eq.(\ref{eq:tr})) is visualized in the right part of Fig.~\ref{fig:method:tr}.
Notably, after $s_B^\star$ is obtained, we would still need to execute the policy in the environment following the sub-goal $s_B^\star$ and then the final goal $g$ in order to obtain a valid demonstration.
Compared to goal relabeling, task reduction consumes additional samples and may even fail to produce a successful trajectory when either of the two sub-tasks fails. Hence, task reduction can be expensive in the early stage of training when the policy and value function have not yet been well trained. However, we will show both theoretically (Sec.~\ref{sec:theory}) and empirically (Sec.~\ref{sec:expr:ant}) that task reduction can lead to an exponentially faster convergence compared to using goal relabeling solely in long-horizon problems. 

By default, {\name} utilizes both goal relabeling and task reduction for data augmentation unless otherwise stated.






\subsection{Offline SL Phase}\label{sec:offline}



After a dataset $\mathcal{D}$ of successful demonstrations, including augmented trajectories, is collected, we switch from RL training to offline SL by performing advantage weighted behavior cloning (BC). In particular, we set the weight $w(s,a,g)$ in Eq.~(\ref{eq:bc}) to $\exp{(\frac{1}{\beta}(R - V_{\phi}(s, g)))}$ following~\cite{peng2019advantage}. 
We remark that we only train policy in the offline phase while keeping the value function $V_{\psi}$ unchanged for algorithmic simplicity. We also empirically find that training value function during the offline phase does not improve the overall performance.


It is feasible to adopt more advanced methods in the offline phase, such as offline RL methods, which typically assume a pre-constructed dataset but conceptually compatible with our phasic learning process. We conduct experiments by substituting BC with two popular offline RL methods, decision transformer~\cite{chen2021decision} and AWAC~\cite{nair2020awac}, in Sec.~\ref{sec:expr:push}. Empirical results show that these alternatives are much more brittle than BC and perform poorly without a high-quality warm-start dataset. Thus, we simply use BC as our SL method in this paper.


\subsection{{\name}: Phasic Self-Imitative Reduction}\label{sec:pair}
By repeatedly performing the online RL phase with task reduction and the offline SL phase, we derive our final algorithm, {\name}. The pseudo-code is summarized in Alg.~\ref{alg:main}.
More implementation details can be found in Appendix~\ref{app:pair_implement}.
\yf{More analysis on the phasic training scheme vs. joint RL and SL optimization is in Appendix~\ref{sec:app:phasic}.}

\begin{algorithm}[tb]
    \caption{Phasic Self-Imitative Reduction}
    \label{alg:main}
    \begin{algorithmic}[1]
        \STATE {\bf Initialize:} goal-conditioned policy $\pi_\theta(\cdot|s, g)$, value function $V_\psi(s, g)$.
        \FOR{$k \gets 1, 2, \cdots$}
            \STATE Sample a batch of trajectories $\mathcal{B}\gets\{\tau:(g;s_t,a_t)\}$ w.r.t. the current policy $\pi_\theta$
            \STATE Update $\pi_\theta, V_\psi$ by RL on $\mathcal{B}$ (Sec.~\ref{sec:rl-phase})
            \STATE Set dataset $\mathcal{D} \gets \emptyset$
            \FOR{$\tau \in \mathcal{B}$}
                \IF{\textbf{not} \texttt{is\_success}($\tau$)}
                    \STATE $\tau \gets \texttt{data\_augment}(\tau, \pi_\theta, V_\psi)$ (Sec.~\ref{sec:task-reduction})
                \ENDIF
                \STATE $\mathcal{D} \gets \mathcal{D} \cup\{\tau\}$
            \ENDFOR
            \STATE Train $\pi_\theta$ with SL over $\mathcal{D}$ (Sec.~\ref{sec:offline})
        \ENDFOR
    \end{algorithmic}
\end{algorithm}


\section{Theoretical Analysis}\label{sec:theory}



First, we establish the correctness of our framework. The following theorems follows directly from the GCSL framework in  \citep{ghosh2020learning}, because both GCSL and PAIR are built upon SL.

\begin{theorem}[\citet{ghosh2020learning}, Theorem 3.1] \label{thm:correct1} Let $J(\pi)$ be defined in Eq. (\ref{eq:rl}) and $J_{\mathrm{PAIR}}(\pi) = -L(\theta)$ as defined in Eq. (\ref{eq:bc}). Let $\tilde\pi$ be the data collection policy induced by data augmentation. Then  
	\begin{align}
		J(\pi) \ge J_{\mathrm{PAIR}}(\pi) - 4 T (T-1)\alpha^2 + C,
	\end{align}
where $\alpha = \max_{s, g} D_{\mathrm{TV}}(\pi(\cdot|s,g)\Vert\tilde\pi(\cdot|s,g))$ and $C$ is a constant independent of $\pi$.\footnote{$D_{\mathrm{TV}}(\mu \Vert \nu) := \frac12 \int_{\mathcal A} \lvert \mu(a)-\nu(a) \rvert~ \mathrm{d}{a}$ is the total variation between distribution $\mu$ and $\nu$ over the action set $\mathcal A$. }
\end{theorem}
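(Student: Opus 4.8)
The plan is to reduce this statement to the corresponding result in \citet{ghosh2020learning}, since the excerpt itself flags that ``both GCSL and PAIR are built upon SL'' and that the theorem ``follows directly from the GCSL framework.'' Concretely, the quantity $J_{\mathrm{PAIR}}(\pi) = -L(\theta)$ is exactly the goal-conditioned imitation objective in Eq.~(\ref{eq:bc}) with behavior-cloning weights $w\equiv 1$, i.e. the expected log-likelihood of the data-collection policy's actions under $\pi$. The first step is therefore to make explicit that the data distribution generating $\mathcal D$ — after goal relabeling and task reduction — can be described by a single (non-Markovian or time-indexed Markovian) ``induced'' policy $\tilde\pi$, so that the PAIR offline phase is literally an instance of GCSL with behavior policy $\tilde\pi$. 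This identification is the conceptual crux; once it is in place the rest is a citation.

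Second, I would invoke the performance-difference bound from \citet{ghosh2020learning} verbatim: their Theorem~3.1 states that for horizon $T$, the true return $J(\pi)$ is lower bounded by the supervised objective minus a term quadratic in the horizon and in $\alpha = \max_{s,g} D_{\mathrm{TV}}(\pi(\cdot|s,g)\,\Vert\,\tilde\pi(\cdot|s,g))$, plus a constant $C$ independent of $\pi$. The structure of that proof is a telescoping/hybrid argument: one couples rollouts of $\pi$ and $\tilde\pi$, bounds the probability that they ever diverge by a union bound over the $T$ steps (each step contributing at most $\alpha$), and then bounds the reward gap on divergent trajectories; summing $1 + 2 + \cdots + (T-1)$ coupling-failure contributions gives the $T(T-1)$ factor, and the $4\alpha^2$ comes from squaring a $2\alpha$-type total-variation bound on the occupancy measures. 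I would restate this chain in one short paragraph, keeping the constant $C$ opaque (it absorbs the entropy-type terms relating $-L(\theta)$ to the relabeled-trajectory likelihood), since the theorem only claims $C$ is $\pi$-independent.

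The step I expect to be the main obstacle — or at least the only place where PAIR genuinely differs from GCSL — is justifying that the GCSL bound still applies when the data-collection policy $\tilde\pi$ is defined through task reduction rather than pure hindsight relabeling. In GCSL, $\tilde\pi$ is the relabeled rollout policy of $\pi$ itself; in PAIR, successful demonstrations may be concatenations of two sub-task rollouts stitched at a sub-goal $s_B^\star$, so $\tilde\pi$ is a composite object. I would argue that this is immaterial to the bound: the theorem is stated for an \emph{arbitrary} data-collection policy $\tilde\pi$, and $\alpha$ is \emph{defined} as the worst-case TV distance between $\pi$ and whatever $\tilde\pi$ actually is, so the concatenation only affects the numerical value of $\alpha$, not the validity of the inequality. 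The only care needed is that $\tilde\pi$ be a well-defined conditional distribution over actions given $(s,g)$ (possibly after marginalizing over the latent sub-goal used during collection), which holds because every trajectory in $\mathcal D$, augmented or not, is a sequence of $(s,a,g)$ triples. With that remark, the statement follows immediately from \citet{ghosh2020learning}, Theorem~3.1, and I would conclude the proof there.
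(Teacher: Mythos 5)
Your proposal is correct and takes essentially the same route as the paper: the paper gives no proof of this theorem at all, only the remark that it ``follows directly from the GCSL framework'' of \citet{ghosh2020learning} because both methods are built on the same supervised objective, which is exactly your reduction. Your additional observation --- that the task-reduction-augmented data collection still defines a valid conditional policy $\tilde\pi$ and only changes the numerical value of $\alpha$, not the validity of the bound --- is a sensible elaboration of the one point the paper leaves implicit.
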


\begin{theorem}[\citet{ghosh2020learning}, Theorem 3.2] \label{thm:correct2} Assume deterministic transition and that $\tilde\pi$ has full support.  Define 
	\begin{align}
		\epsilon:= \max_{s, g} D_{\mathrm{TV}}(\pi(\cdot|s,g)\Vert \hat\pi^\star(\cdot|s,g)).
	\end{align}
	Then $J(\pi^\star) - J(\pi) \le \epsilon \cdot T$, where $\hat \pi^\star$ minimizes $L(\theta)$ defined in Eq. (\ref{eq:bc}) and $\pi^\star$ is the optimal policy.
\end{theorem}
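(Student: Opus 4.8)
The plan is to read the claim as a standard behavior-cloning error-propagation bound and to control the suboptimality gap by the decomposition
\[
J(\pi^\star) - J(\pi)
= \underbrace{\bigl(J(\pi^\star) - J(\hat\pi^\star)\bigr)}_{\text{(I)}}
+ \underbrace{\bigl(J(\hat\pi^\star) - J(\pi)\bigr)}_{\text{(II)}} ,
\]
showing that term (I) vanishes and term (II) is at most $\epsilon\cdot T$. This mirrors the argument of \citet{ghosh2020learning}; the part that is specific to {\name} is to check that the data-collection policy $\tilde\pi$ obtained after \emph{goal relabeling and task reduction} still satisfies the hypotheses, namely that every relabeled or concatenated trajectory is a genuine goal-reaching trajectory (true because transitions are deterministic, so the sub-goal concatenation in Eq.~(\ref{eq:tr}) executes exactly as planned and produces a faithful demonstration, introducing valid data but no spurious data) and that $\tilde\pi$ has full support (assumed).

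\emph{Term (I): the SL optimum is RL-optimal.} First I would argue $J(\hat\pi^\star) = J(\pi^\star)$. Under deterministic dynamics, relabeling the goal of a rollout to an actually-visited state $s_j$ turns every prefix into a correct demonstration of reaching $s_j$, so at every $(s,g)$ pair that occurs in the relabeled, task-reduced dataset $\mathcal{D}$ the recorded actions are all ones from which $g$ remains reachable (in fact along a witnessed path of strictly decreasing remaining length). Because $\tilde\pi$ has full support, every $(s,g)$ with $g$ reachable from $s$ within $T$ steps does occur in $\mathcal{D}$, so the minimizer $\hat\pi^\star$ of $L(\theta)$ in Eq.~(\ref{eq:bc}) is pinned down there and supported only on such progress-making actions; the remaining-horizon bookkeeping of \citet{ghosh2020learning} then shows $\hat\pi^\star$ reaches every goal that $\pi^\star$ can reach, hence $J(\hat\pi^\star)=J(\pi^\star)$ and term (I) is $0$.

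\emph{Term (II): supervised error propagation.} For the second term I would use a coupling argument. Run $\pi$ and $\hat\pi^\star$ from the same $s_0\sim\rho_0$ and the same $g\in\mathcal{G}$, and at each step use the maximal coupling of $\pi(\cdot|s,g)$ and $\hat\pi^\star(\cdot|s,g)$, whose two samples agree with probability $1 - D_{\mathrm{TV}}(\pi(\cdot|s,g)\Vert\hat\pi^\star(\cdot|s,g)) \ge 1-\epsilon$. Since transitions are deterministic, as long as the sampled actions have agreed the two trajectories occupy identical states, so by a union bound over the $\le T$ steps the full trajectories coincide with probability at least $1-\epsilon T$. On the coincidence event the returns are equal; on its complement they differ by at most $1$, because the $0/1$ reward together with termination on success makes the (discounted) return lie in $[0,1]$. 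Hence $J(\hat\pi^\star)-J(\pi)\le \epsilon T$, and combining with term (I) gives $J(\pi^\star)-J(\pi)\le \epsilon T$.

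\emph{Main obstacle.} Term (II) is routine (a coupling plus the boundedness of the return). The substantive step is term (I): establishing that the fixed point of behavior cloning on \emph{self-generated, relabeled, and task-reduced} data is an optimal goal-reaching policy. This is exactly where both structural assumptions are used — determinism so that relabeled and concatenated demonstrations are faithful, and full support so that every reachable $(s,g)$ appears in $\mathcal{D}$ — and it is where one must be careful that task reduction does not enlarge the support of $\hat\pi^\star$ with bad actions. Since task reduction only ever concatenates executed sub-trajectories, it adds valid demonstrations without spurious ones, so the GCSL analysis transfers and the theorem follows.
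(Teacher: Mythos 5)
First, a point of comparison: the paper does not actually prove this statement --- it is imported verbatim from \citet{ghosh2020learning} (the text explicitly says it ``follows directly from the GCSL framework''), so your reconstruction is really being measured against Ghosh et al.'s argument rather than anything in this paper's appendix. Your skeleton is the same as theirs: split $J(\pi^\star)-J(\pi)$ into ``the SL optimum is RL-optimal'' plus ``TV-closeness to the SL optimum costs at most $\epsilon$ per step.'' Your coupling proof of Term (II) is correct: maximal coupling at each step, determinism keeps the coupled trajectories synchronized until the first disagreement, a union bound over $T$ steps gives divergence probability at most $\epsilon T$, and the $0/1$ reward with termination bounds the return difference by $1$.

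The gap is in Term (I), and it sits exactly where you defer to ``the remaining-horizon bookkeeping of \citet{ghosh2020learning}.'' Under full support, the relabeled dataset contains at a pair $(s,g)$ essentially every action $a$ from which $g$ remains reachable --- not only actions on shortest or even monotonically progressing paths; your parenthetical ``witnessed path of strictly decreasing remaining length'' is false for a wandering relabeled trajectory, since $d(s_{t+1},g)$ can exceed $d(s_t,g)$ even though the trajectory later visits $g$. A BC optimum supported on all goal-preserving actions can oscillate, need not reach $g$ within the horizon $T$ with probability one, and under the discounted objective of Eq.~(\ref{eq:rl}) even reaching $g$ slowly strictly lowers $J$, so $J(\hat\pi^\star)=J(\pi^\star)$ does not follow from reachability alone. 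GCSL closes this by conditioning the policy on the remaining horizon $h$, so that $\hat\pi^\star(\cdot|s,g,h)$ is supported only on actions from which $g$ is reachable in $h-1$ further steps; that is what forces progress and makes Term (I) vanish. Since the policy class here is $\pi(a|s,g)$ without $h$, Term (I) does not go through as written --- you must either reinstate horizon conditioning (as in the original theorem) or argue that only time-consistent actions survive the relabeling. Your task-reduction remark reaches the right conclusion (concatenated sub-trajectories are valid demonstrations), but the reason is that both sub-tasks are actually rolled out in the environment and kept only on success, not that the planned sub-goal ``executes exactly as planned.''
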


Here, we present theoretical justification to illustrate why our algorithm is  efficient on sparse-reward composite combinatorial tasks.

\begin{theorem} Under mild assumptions, the PAIR framework could use exponentially less number of iterations compared to the phasic framework that does not use task reduction, e.g., GCSL~\cite{ghosh2020learning}. 
\label{thm:exp}
\end{theorem}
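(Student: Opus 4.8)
The statement is informal ("under mild assumptions"), so the proof strategy is to set up a clean stylized model of a sparse-reward compositional task on which both the number of iterations for GCSL-style phasic SL and for PAIR can be computed (or bounded), and then compare. The natural choice is a chain/combinatorial navigation problem: a state space organized into "levels" $0,1,\dots,L$ (e.g. a line graph of length $L$, or more pertinently a task whose solution decomposes into $L$ subtasks each of fixed difficulty), where from level $i$ the policy can reliably reach level $i+1$ once it has been trained on demonstrations of that one-step subtask, but a uniformly-random policy reaches a state at level $i$ only with probability exponentially small in $i$. I would formalize "the current policy can solve a subtask" via the success probability being bounded below by a constant once the relevant demonstrations have entered the dataset $\mathcal D$, and invoke Theorems~\ref{thm:correct1}–\ref{thm:correct2} so that SL on $\mathcal D$ genuinely improves the policy on the covered region.

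**The GCSL (no task reduction) lower bound.** First I would argue that plain phasic SL with only goal relabeling can, in iteration $k$, only add to $\mathcal D$ trajectories whose reached states lie in the "already reachable" frontier of the current policy, because relabeling merely re-targets states that were actually visited during rollouts. So the reachable frontier advances by at most one level per iteration (you need $\Theta(1)$ iterations to push through each new level, since only after level-$i$ demos are cloned does the policy reliably sit at level $i$ and then random exploration can stumble one step further to level $i+1$). This gives a lower bound of $\Omega(L)$ iterations to reach level $L$ — and if one insists on *measuring* difficulty by sample count rather than iteration count, the per-iteration rollout budget needed to witness even a single new frontier state is itself constant here but would be exponential if exploration had to reach deeper, which is exactly the regime task reduction avoids. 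The cleanest statement is: GCSL needs $\Omega(L)$ phases (equivalently $\mathrm{poly}$ or worse in $L$) to solve the depth-$L$ task.

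**The PAIR (with task reduction) upper bound.** Here the key mechanism is that task reduction does a 1-step planning step over the *learned* value function $V_\psi$: to produce a successful demonstration for a depth-$2m$ task it only needs a subgoal $s_B$ at depth $m$ such that both the depth-$0\!\to\!m$ subtask and the depth-$m\!\to\!2m$ subtask are already solvable. So if after some phase the policy can solve all depth-$\le d$ tasks, then after one more online phase (during which task reduction stitches two depth-$d$ segments) the dataset $\mathcal D$ contains successful depth-$2d$ demonstrations, and one SL phase upgrades the policy to solve depth-$\le 2d$ tasks (again via Theorems~\ref{thm:correct1}–\ref{thm:correct2}, with the data-collection policy $\tilde\pi$ being the task-reduction-augmented one). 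The reachable depth thus *doubles* each iteration, so $O(\log L)$ iterations suffice — exponentially fewer than $\Omega(L)$. I would also note the base case: a constant number of initial phases to bootstrap depth-$O(1)$ competence, and I would record the "mild assumptions" explicitly — deterministic (or low-noise) transitions so that stitched trajectories remain valid (matching the hypothesis of Theorem~\ref{thm:correct2}), constant per-subtask success probability above threshold, and accuracy of $V_\psi$ on the covered region so that the $\arg\max$ in Eq.~(\ref{eq:tr}) returns a genuinely feasible subgoal.

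**Main obstacle.** The delicate part is not the doubling argument but pinning down the inductive hypothesis so that it is self-consistent across phases: one must ensure that the value function $V_\psi$, which in Alg.~\ref{alg:main} is trained only by RL on on-policy rollouts $\mathcal B$ and never in the offline phase, is nonetheless accurate enough on the depth-$d$ region for the task-reduction $\arg\max$ to succeed — i.e. that RL value learning "keeps up" with the SL-driven expansion of policy competence. I would handle this by folding it into the "mild assumptions" (assume $V_\psi$ is $\epsilon$-accurate on states the current policy reaches with non-negligible probability, which is exactly where task reduction queries it), and by noting that the composition operator $\oplus$ being multiplication means the composed value is bounded below by the product of the two subtask success rates, so a constant-vs-exponentially-small gap in composed value cleanly separates feasible from infeasible subgoals and makes the $\arg\max$ robust to moderate value-estimation error. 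The remaining bookkeeping — propagating the $4T(T-1)\alpha^2$ and $\epsilon T$ slacks from Theorems~\ref{thm:correct1}–\ref{thm:correct2} through $O(\log L)$ compositions without them blowing up — is routine once $T$ is taken as the (constant) per-subtask horizon rather than the full depth-$L$ horizon.
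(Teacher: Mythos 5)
Your proposal matches the paper's proof in Appendix A essentially step for step: the same doubling lemma for task reduction (if the policy solves all goals within distance $\ell_{k-1}$, stitching two such segments yields demonstrations at distance $2\ell_{k-1}$, so $\ell_k \ge 2\ell_{k-1}$ and $O(\log D)$ iterations suffice) and the same chain-type hard instance for the $\Omega(D)$ lower bound without task reduction. The one place the paper is more careful is the lower bound: it does not claim the frontier advances by at most one level per iteration deterministically, but instead bounds the probability of advancing $j$ levels by $O(\lvert\mathcal A\rvert^{-j})$, deduces an $O(1)$ expected advance, and applies Azuma--Hoeffding to get $\Omega(D)$ iterations with high probability --- a small tightening your sketch would need to be rigorous.
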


We defer the exact statement of \cref{thm:exp} and its proof to \cref{app:proofexp}.

\section{Experiment}
We aim to answer the following questions in this section:
\begin{itemize}
    \item \emph{Does phasic training of RL and SL objectives perform better than non-phasic joint optimization?}
    \item \emph{Is {\name} compatible with offline RL methods other than BC?}
    \item \emph{Does {\name} achieve exponential improvement over baselines \emph{without} task reduction?}
    \item \emph{Are all the algorithmic components of {\name} necessary for good performance?}
    \item \emph{Can {\name} be used to solve challenging long-horizon sparse-reward problems, e.g., cube stacking?}
\end{itemize}


We consider 3 goal-conditioned control problems with increasing difficulty:
(i) short-horizon robotic pushing (adopted from~\cite{nair2018visual}), 
(ii) ant navigation in a U-shaped maze, 
and (iii) robotic stacking with up to 6 cubes. 
All the tasks are with only 0-1 sparse reward.

We compare {\name} with non-phasic RL baslines that jointly perform RL and SL as well as phasic SL baselines that only perform SL over self-generated data.
For non-phasic RL baselines, we consider naive PPO, plain self-imitation learning with goal relabeling (SIL)~\cite{oh2018self} and self-imitation learning with task reduction (SIR)~\cite{li2020solving}.
For phasic SL baselines, we consider goal-conditioned supervised learning (GCSL)~\cite{ghosh2020learning}.
We emphasize that for a fair comparison, all the RL-based baselines leverage our proposed intrinsic rewards.
All the experiments are repeated over 3 random seeds on a single desktop machine with a GTX3090 GPU. More implementation and experiment details can be found in appendix.



\subsection{Sawyer Push}\label{sec:expr:push}
\begin{figure}
    \begin{minipage}[t]{0.33\linewidth}
        \centering
        \includegraphics[width=0.9\linewidth]{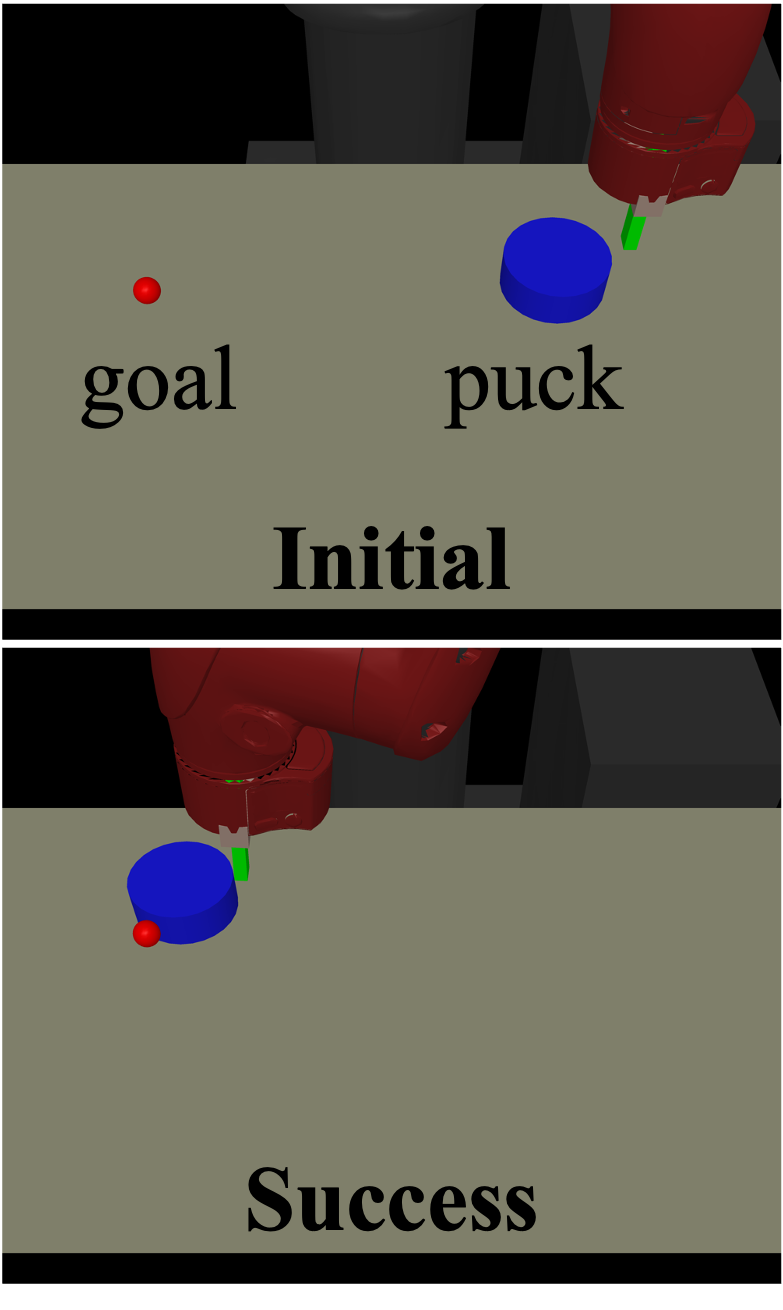}
        \caption{An initial state and a successful state in ``\textit{Push}'' environment.}
        \label{fig:expr:push_task}
    \end{minipage}
    \hspace{2mm}
    \begin{minipage}[t]{0.65\linewidth}
        \centering
        \includegraphics[width=\linewidth]{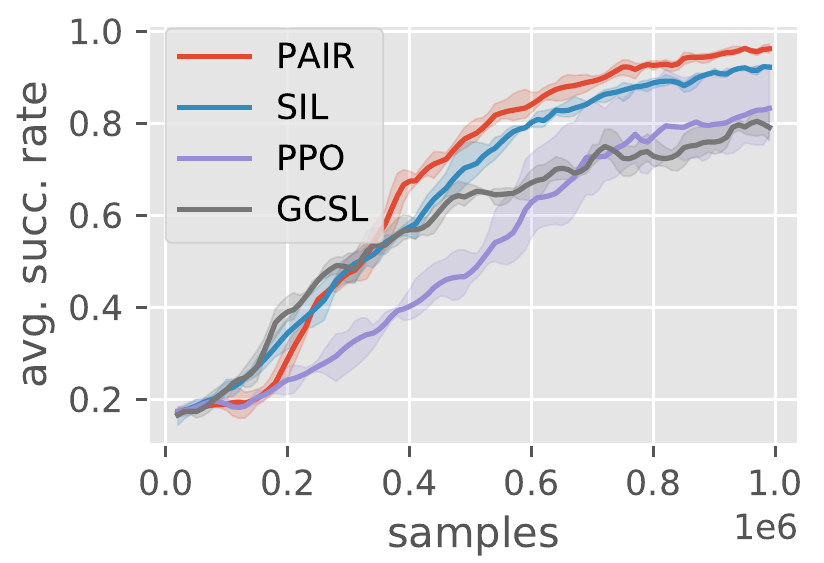}
        \caption{Average success rate vs. number of samples in ``\textit{Push}''. {\name} achieves the best performance compared to non-phasic, pure RL and pure SL baselines. }
    \label{fig:expr:sawyerpush}
    \end{minipage}
\end{figure}

\begin{figure}
    \centering
    \includegraphics[width=0.49\linewidth]{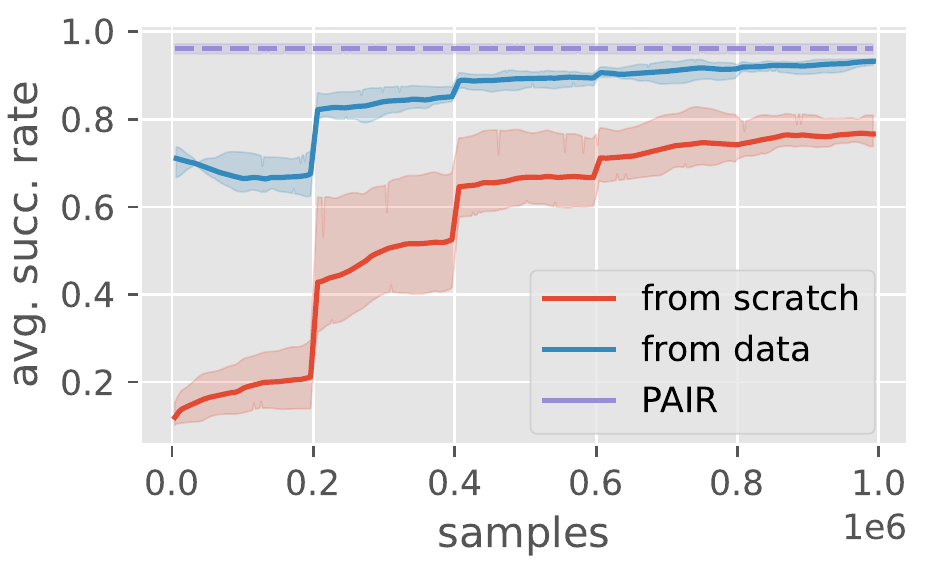}
    \includegraphics[width=0.49\linewidth]{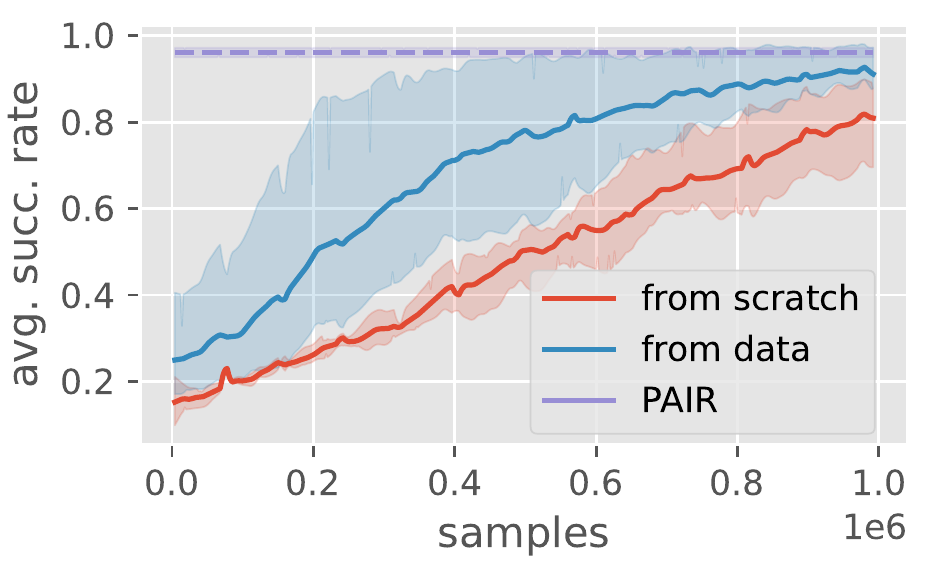}  
    \caption{Combining {\name} with offline algorithms AWAC (left) and DT (right) in ``\textit{Push}'' domain. We train them both from scratch (red) and from a demonstration dataset (blue). The performances of {\name} with PPO/BC are in dashed purple line.}
    \vspace{-3mm}
    \label{fig:expr:pair_offline}
\end{figure}

We first answer whether our phasic framework can outperform non-phasic training algorithms on a simple ``\textit{Push}'' task. As illustrated in Fig.~\ref{fig:expr:push_task}, a Sawyer robot is tasked to push the puck to the goal within 50 steps.  
The initial position of the robot hand, the puck and the goal are randomly initialized on the table. Since it only requires very few steps to reach the goal -- even the largest distance between puck and goal is within 20 steps of actions using a well trained policy, task reduction would not make its best use and may hurt sample efficiency due to extra sample consumption. In this particularly simple domain, we only use goal relabeling as a single data augmentation technique in {\name}.

\textbf{Effectiveness of phasic training:}
We compare {\name} with non-phasic RL method, i.e., SIL, which utilizes the successful relabeled demonstrations by jointly optimizing SL and RL objectives. As shown in Fig.~\ref{fig:expr:sawyerpush}, {\name} (red) gets higher success rate than SIL (blue) using fewer samples while both methods perform better than naive PPO, which is a pure RL baseline. We also compare with phasic SL method, GCSL~\cite{ghosh2020learning}, which only performs iterative SL on relabeled data without RL training. \yf{Following the original implementation of GCSL, every trajectory is relabeled as a successful one targeting at an achieved state.} GCSL learns fast in the beginning, but achieves a substantially lower final success rate than other methods.

\textbf{Combining with offline RL methods: }
Here we provide the results with initial attempts to combine our framework with representative offline RL methods, AWAC~\cite{nair2020awac} and decision transformer (DT)~\cite{chen2021decision}.

Original AWAC performs a single fine-tuning phase after offline pretraining~\cite{nair2020awac,lu2021awopt}. Following the phasic framework of {\name}, we alternate between offline and online AWAC updates. 
We examine both training from scratch without any dataset prepared in advance, or starting from the offline phase with a warm-start dataset of successful trajectories. 
The results are shown on the left of Fig.~\ref{fig:expr:pair_offline}. Phasic-AWAC from scratch (red) continues making progress as it switches from offline to online phase, but finally converges to a much worse policy compared with the variant initialized with a warm-start dataset (blue). 

DT predicts actions conditioning on a sequence of desired returns, past states and actions via a transformer model~\cite{vaswani2017attention}.  Since DT is proposed only for a single offline phase, we similarly adopt PPO for RL training in the online phase. We use a context length of 5 for sequence conditioning and train DT both from scratch and from prepared warm-start dataset with successful demonstrations. 
As shown in the right plot of Fig.~\ref{fig:expr:pair_offline}, the performance progressively improves within each online phase and SL phase. 
Phasic-DT with warm-start (blue) outperforms the variant from scratch and gets similar final performance as {\name} but the variance is much higher. 

\begin{figure}
    \centering
    \includegraphics[width=0.49\linewidth]{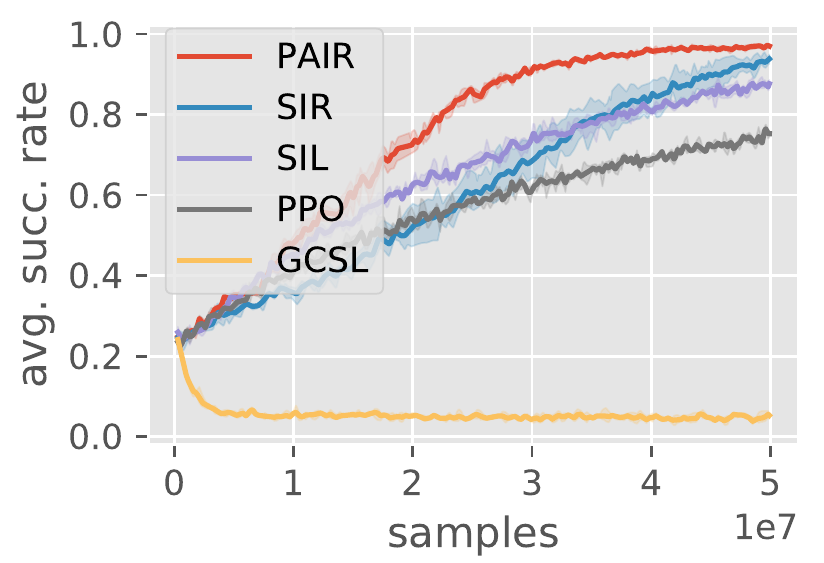}
    \includegraphics[width=0.49\linewidth]{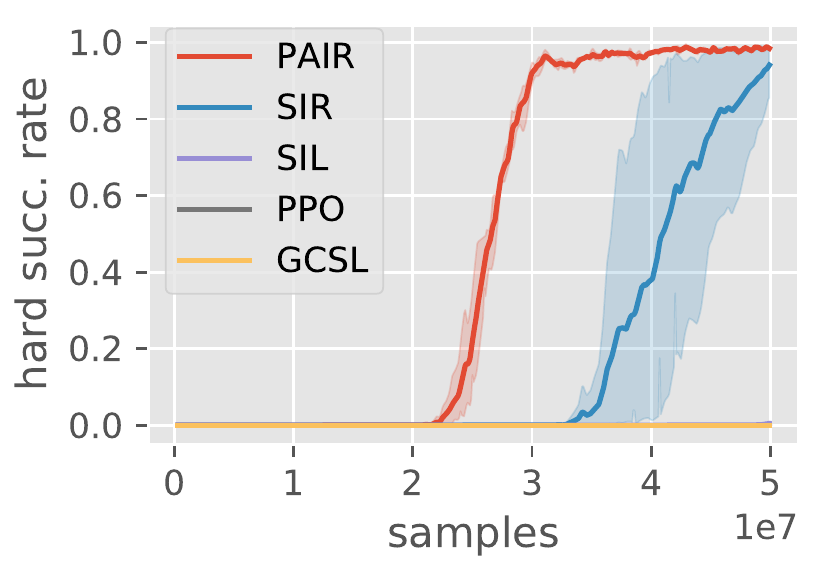}
    \caption{(\textit{Left}) Average success rate over uniformly sampled tasks, where ant and goal positions are uniformly sampled. (\textit{Right}) Success rate evaluated on one particularly difficult task configuration with ant and goal initialized at two ends of the maze.}
    \vspace{-3mm}
    \label{fig:expr:antmaze}
\end{figure}
\begin{figure*}
    \centering
    \begin{subfigure}{0.3\textwidth}
        \includegraphics[width=\linewidth]{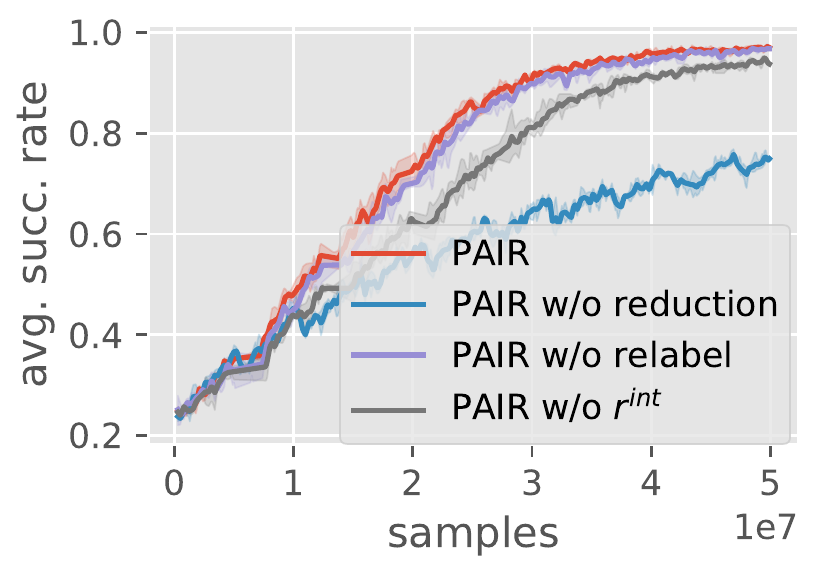}
        \caption{Ablate {\name} components.}\label{fig:expr:antmaze_ablation_main}
    \end{subfigure}
    \begin{subfigure}{0.3\textwidth}
        \includegraphics[width=\linewidth]{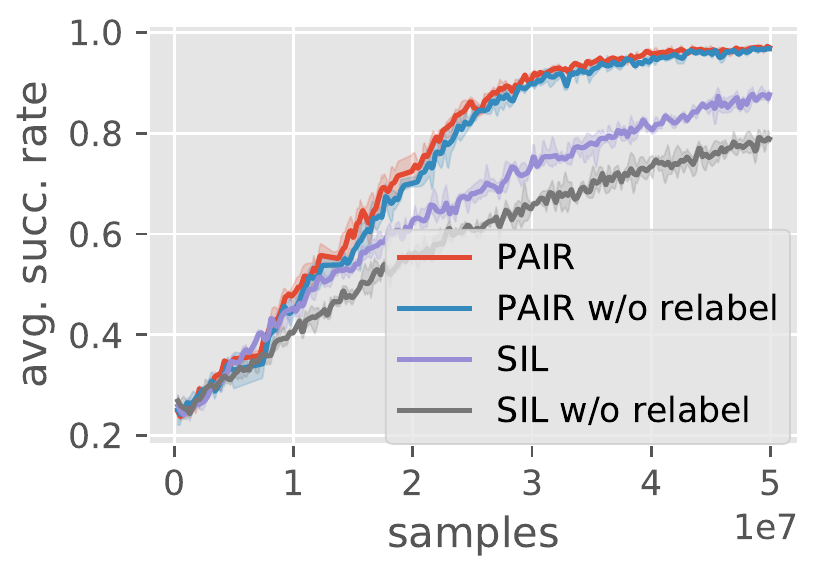}
        \caption{The effect of goal-relabeling on SIL.}\label{fig:expr:antmaze_ablation_relabel}
    \end{subfigure}
    \begin{subfigure}{0.3\textwidth}
        \includegraphics[width=\linewidth]{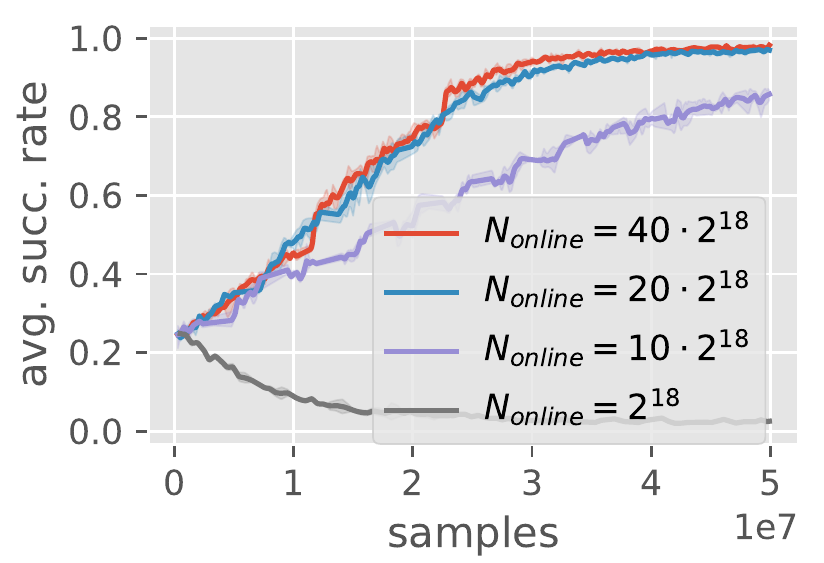}
        \caption{Sensitivity analysis of $N_{\textrm{online}}$.}\label{fig:expr:antmaze_ablation_n}
    \end{subfigure}
    \caption{Ablation studies on algorithmic components of {\name} in ``\textit{Ant Maze}''. (a) Performances after removing different components of {\name}. Using reduction data and intrinsic reward help the most for {\name}. (b) Validate the effectiveness of goal relabeling on SIL. (c) Comparison of different $N_{\textrm{online}}$, i.e., samples collected in the RL phase. {\name} works well with less frequent phase switches.}
    \label{fig:expr:antmaze_ablation}
\end{figure*}

We generally observe that offline RL algorithms within our phasic framework would require a good dataset to initialize and do not perform as robustly as the simple PPO and BC combination when starting from scratch. Therefore, we confirm our use of BC/PPO for the remaining experiments and leave a more competitive combination with offline RL algorithms as future work.



    

\subsection{Ant Maze}\label{sec:expr:ant}

We then consider a harder task ``\emph{Ant Maze}'', which requires a locomotion policy with continuous actions to control the ant robot and plan over an extended horizon to navigate inside a 2-D U-shape maze. The observations include the center of mass and the joint position and velocities of the ant robot. The goal is a 2-D vector denoting the $xy$ position of that the robot should navigate to. 
For each episode, the initial position of the ant's center and the goal position are uniformly sampled over the empty space inside the maze. We define the \emph{hard task} configuration as when the ant and the goal are initialized at two different ends of the maze (see Fig.~\ref{fig:method:tr}).
In this domain, we leverage both task reduction and goal relabeling as data augmentation techniques in {\name}.

\textbf{Effectiveness of {\name} and exponential improvement on hard goals:}
As shown in Fig.~\ref{fig:expr:antmaze}, we evaluate the performances of different algorithms with the average success rate over the entire task space (left) and on hard tasks only (right). We compare {\name} with: SIR, which runs SL and RL jointly using both reduction and relabeling data; SIL, a plain non-phasic RL method without task reduction; GCSL and vanilla PPO. We observe that all the methods that utilize task reduction (i.e., {\name}, SIR) can finally converge to a much higher average success rate (left plot) and outperform SIL and vanilla PPO. 
The gap becomes substantially larger on the performance on hard situations (right plot), where only methods with task reduction are able to produce non-zero success rate within the given sample budget. 
This empirical observation is consistent with our theoretical analysis in Sec.~\ref{sec:theory} on the effectiveness of task reduction. 
We also remark that {\name} achieves a significantly higher sample efficiency compared with the non-phasic method SIR on the hard cases.
Notably, GCSL completely fails in this problem. We empirically observe that the ant robot may frequently get stuck in the early stage of training (e.g., it may accidentally fall over and cannot recover anymore). We hypothesis that the failure of GCSL is largely due to a tremendous amount of supervision from such corrupted trajectories. This suggests that online RL training would be necessary compared with running SL only (e.g., SIL performs well on this task).




\begin{figure}
    \centering
    \includegraphics[width=\linewidth]{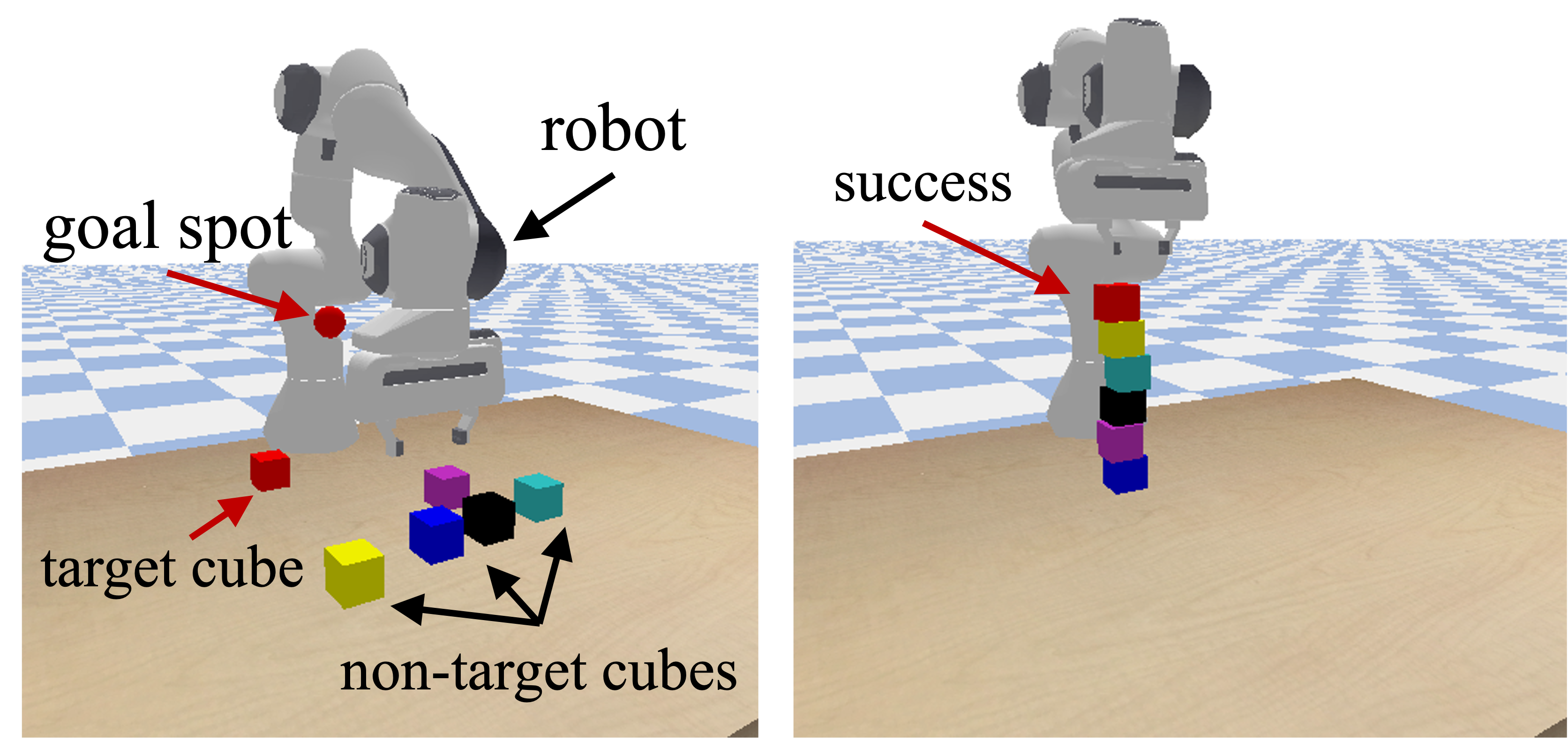}
    \caption{Illustration of the stacking environment. The left figure shows an initial state, where the red ball denotes the identity and the desired position of a target cube. On the right we show a successful state for this task, where the red cube is close to the goal and the robot hand does not touch the tower.}
    \vspace{-3mm}
    \label{fig:expr:stack_task}
\end{figure}

\begin{figure*}
    \begin{minipage}{0.48\textwidth}
        \centering
        \includegraphics[width=0.7\linewidth]{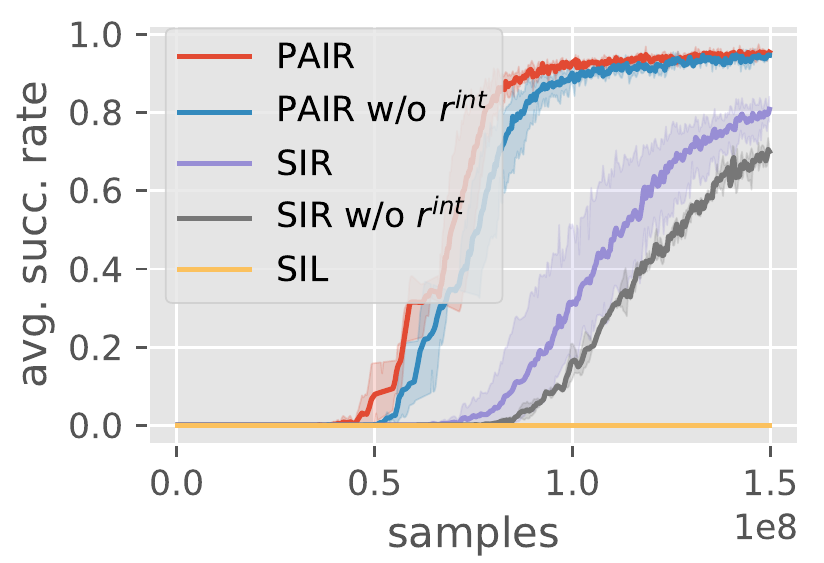}
        \caption{Average success rate in ``stack-6-cube'' with sparse reward. {\name} solves the task with high success rate most efficiently.}
        \label{fig:expr:stacking}
    \end{minipage}
    \hspace{2mm}
    \begin{minipage}{0.49\linewidth}
        \centering
        \begin{tabular}{lcc}
        \toprule
            Algorithm & SR @ 7.5e7 steps & SR @ 1.5e8 steps\\
        \midrule
            {\name} & \textbf{0.693 $\pm$ 0.176} & \textbf{0.955 $\pm$ 0.005}\\
            {\name} w/o $r^{\textrm{int}}$ & 0.493 $\pm$ 0.156 & 0.952 $\pm$ 0.016\\
            SIR & 0.026 $\pm$ 0.043 & 0.815 $\pm$ 0.039\\
            SIR w/o $r^{\textrm{int}}$ & 0.003 $\pm$ 0.003 & 0.688 $\pm$ 0.011\\
            SIL & 0.000 $\pm$ 0.000 & 0.000 $\pm$ 0.000\\
        \bottomrule
        \end{tabular}
    \captionsetup{type=table}\caption{Mean and standard deviation of success rates for stacking 6 boxes with different algorithms over 3 seeds. The policies are evaluated at 7.5e7 and 1.5e8 environment samples.}
    \label{tab:expr:stack_sr}
    \end{minipage}
        
\end{figure*}
\begin{figure*}[tb]
    \centering
    \includegraphics[width=0.15\linewidth]{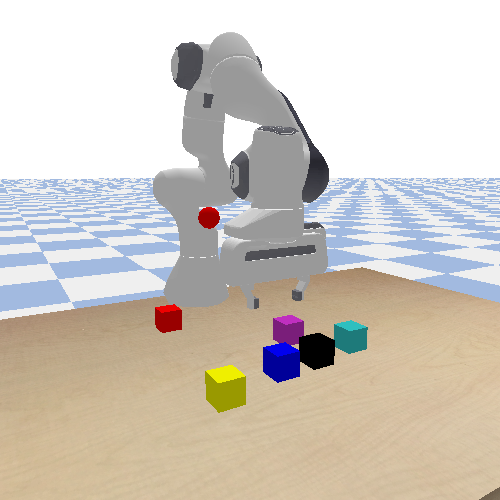}
    \includegraphics[width=0.15\linewidth]{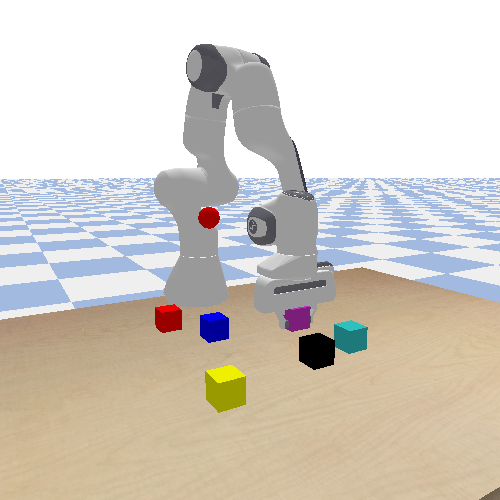}
    \includegraphics[width=0.15\linewidth]{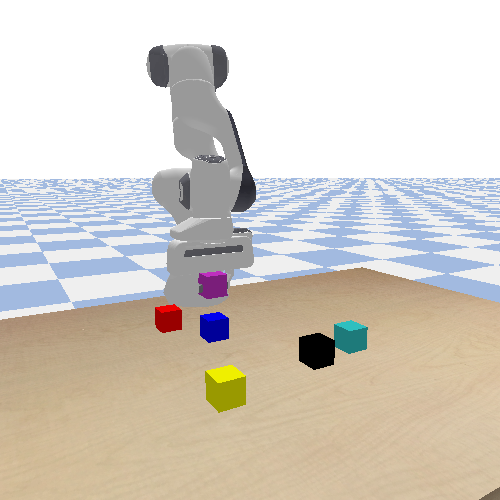}
    \includegraphics[width=0.15\linewidth]{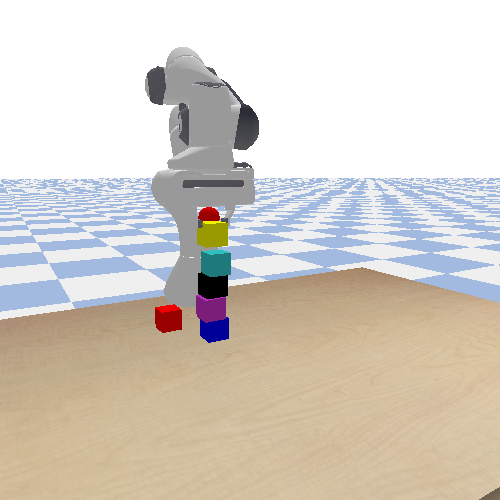}
    \includegraphics[width=0.15\linewidth]{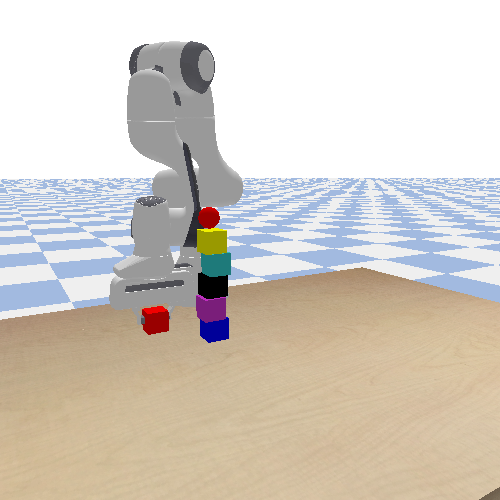}
    \includegraphics[width=0.15\linewidth]{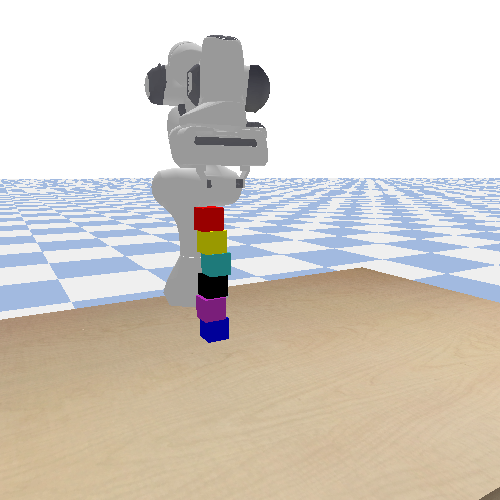}
    \vspace{-1mm}
    \caption{Learned policy of {\name} agent for stacking 6 randomly initialized cubes. }
    \label{fig:expr:stack_policy}
\end{figure*}

\textbf{Ablation studies:}
We then study the effectiveness of different components of {\name}. From Figure~\ref{fig:expr:antmaze_ablation_main}, we can find that removing value-difference-based intrinsic reward makes {\name} converge slower while the policy is still able to achieve a high success rate after convergence. After removing task reduction, the performance becomes significantly worse. By contrast, when removing goal relabeling, the performance drop is negligible. These observations suggest that task reduction is the most critical component in {\name}.
In Figure~\ref{fig:expr:antmaze_ablation_relabel}, we additionally examine the effectiveness of goal relabeling within a non-phasic RL framework, i.e., SIL, where we can clearly observe a performance drop of SIL with goal relabeling turned off. This indicates that goal relabeling can be still beneficial for methods that do not utilize task reduction.
Finally, in Figure~\ref{fig:expr:antmaze_ablation_n}, we also perform sensitivity analysis on the frequency of alternating between the offline and online phase. We perform one offline SL phase after collecting $N_{\textrm{online}}$ samples in the online RL phase.
We can observe that a relatively low alternating frequency is critical to the success of {\name}, which suggests that a large dataset for SL training and a long RL fine-tuning period help accelerate learning. We notice that when the phase changes too frequently, {\name} can be unstable or even fail to learn. 

\subsection{Stacking}

    

Finally, we test whether {\name} can solve an extremely challenging long-horizon sparse-reward robotic manipulation problem. 
We build a robotic control environment that features stacking multiple cubes given only final success reward.
The simulated scene is shown in Fig.~\ref{fig:expr:stack_task}. A Franka Panda arm is mounted on the side of a table. On top of the table, there are a total of $N$ cubes with random initial positions. A goal spot specifies a target cube using its color and also its desired position. The robot's task is to manipulate the cubes on the table so that the specified target cube can remain stable within 3cm distance to the goal position, while its hand is at least 10cm apart from that position at the same time. In order to accomplish this task, the robot must build a ``base'' tower using other non-target cubes. During the whole construction process, \emph{the robot cannot receive any external reward for intermediate manipulations unless when the target cube is in place}. We perform curriculum on the number of cubes to stack, and use the similar task reduction process described in~\cite{li2020solving}. More details of training specifications can be found in Appendix~\ref{app:stack_train_detail}.

\textbf{Performance on stacking with 6 cubes: }
We report the success rate on the most challenging ``stack-6-cube'' scenario using {\name}, SIR and SIL algorithms in Fig.~\ref{fig:expr:stacking} and Table~\ref{tab:expr:stack_sr}. {\name} achieves an impressive 95.5\% success rate in this challenging task. SIR is making considerable progress thanks to task reduction, but it learns significantly less sample efficiently than {\name}. SIL baseline without task reduction fails completely, getting 0 success rate throughout the training process. 
We also show the effectiveness of intrinsic reward in online training: similar to the findings in other domains, $r^{\textrm{int}}$ can significantly speed up the training process.  


\textbf{Learned strategies: }
The learned policy using {\name} is visualized in Fig.~\ref{fig:expr:stack_policy}. In the initial frame, all 6 cubes are scattered randomly on the table. The robot then picks up non-target cubes, transports them to accurate positions aligned with the goal spot one by one. Even when picking up the red cube which locates close to the half-built tower, the robot is cautious enough to avoid knocking down the tower.

\section{Conclusion}
We propose a phasic training method {\name} that efficiently combines offline supervised learning with online reinforcement learning for sparse-reward goal-conditioned problems, such as robotic stacking of multiple cubes. 
{\name} repeatedly alternates between SL on self-generated datasets and RL fine-tuning and leverages value difference as intrinsic rewards and task reduction as data augmentation.
We validate the effectiveness of {\name} both theoretically and empirically on a variety of domains. 
We remark that {\name} provides a general learning paradigm that has the potential to be combined with more advanced offline RL methods, even though our initial attempts are not satisfactory.
%
%
We hope {\name} can be a promising step to take advantage of both supervised learning and reinforcement learning and help make RL a more scalable tool for complex real-world challenges.


\section*{Acknowledgements}
We thank Jingzhao Zhang for valuable discussion on phasic optimization. Yi Wu is supported by 2030 Innovation Megaprojects of China (Programme on New Generation Artificial Intelligence) Grant No. 2021AAA0150000. 
\hide{
\section{Electronic Submission}
\label{submission}

Submission to ICML 2022 will be entirely electronic, via a web site
(not email). Information about the submission process and \LaTeX\ templates
are available on the conference web site at:
\begin{center}
\textbf{\texttt{http://icml.cc/}}
\end{center}

The guidelines below will be enforced for initial submissions and
camera-ready copies. Here is a brief summary:
\begin{itemize}
\item Submissions must be in PDF\@. 
\item \textbf{New to this year}: If your paper has appendices, submit the appendix together with the main body and the references \textbf{as a single file}. Reviewers will not look for appendices as a separate PDF file. So if you submit such an extra file, reviewers will very likely miss it.
\item Page limit: The main body of the paper has to be fitted to 8 pages, excluding references and appendices; the space for the latter two is not limited. For the final version of the paper, authors can add one extra page to the main body.
\item \textbf{Do not include author information or acknowledgements} in your
    initial submission.
\item Your paper should be in \textbf{10 point Times font}.
\item Make sure your PDF file only uses Type-1 fonts.
\item Place figure captions \emph{under} the figure (and omit titles from inside
    the graphic file itself). Place table captions \emph{over} the table.
\item References must include page numbers whenever possible and be as complete
    as possible. Place multiple citations in chronological order.
\item Do not alter the style template; in particular, do not compress the paper
    format by reducing the vertical spaces.
\item Keep your abstract brief and self-contained, one paragraph and roughly
    4--6 sentences. Gross violations will require correction at the
    camera-ready phase. The title should have content words capitalized.
\end{itemize}

\subsection{Submitting Papers}

\textbf{Paper Deadline:} The deadline for paper submission that is
advertised on the conference website is strict. If your full,
anonymized, submission does not reach us on time, it will not be
considered for publication. 

\textbf{Anonymous Submission:} ICML uses double-blind review: no identifying
author information may appear on the title page or in the paper
itself. \cref{author info} gives further details.

\textbf{Simultaneous Submission:} ICML will not accept any paper which,
at the time of submission, is under review for another conference or
has already been published. This policy also applies to papers that
overlap substantially in technical content with conference papers
under review or previously published. ICML submissions must not be
submitted to other conferences and journals during ICML's review
period.
Informal publications, such as technical
reports or papers in workshop proceedings which do not appear in
print, do not fall under these restrictions.

\medskip

Authors must provide their manuscripts in \textbf{PDF} format.
Furthermore, please make sure that files contain only embedded Type-1 fonts
(e.g.,~using the program \texttt{pdffonts} in linux or using
File/DocumentProperties/Fonts in Acrobat). Other fonts (like Type-3)
might come from graphics files imported into the document.

Authors using \textbf{Word} must convert their document to PDF\@. Most
of the latest versions of Word have the facility to do this
automatically. Submissions will not be accepted in Word format or any
format other than PDF\@. Really. We're not joking. Don't send Word.

Those who use \textbf{\LaTeX} should avoid including Type-3 fonts.
Those using \texttt{latex} and \texttt{dvips} may need the following
two commands:

{\footnotesize
\begin{verbatim}
dvips -Ppdf -tletter -G0 -o paper.ps paper.dvi
ps2pdf paper.ps
\end{verbatim}}
It is a zero following the ``-G'', which tells dvips to use
the config.pdf file. Newer \TeX\ distributions don't always need this
option.

Using \texttt{pdflatex} rather than \texttt{latex}, often gives better
results. This program avoids the Type-3 font problem, and supports more
advanced features in the \texttt{microtype} package.

\textbf{Graphics files} should be a reasonable size, and included from
an appropriate format. Use vector formats (.eps/.pdf) for plots,
lossless bitmap formats (.png) for raster graphics with sharp lines, and
jpeg for photo-like images.

The style file uses the \texttt{hyperref} package to make clickable
links in documents. If this causes problems for you, add
\texttt{nohyperref} as one of the options to the \texttt{icml2022}
usepackage statement.

\subsection{Submitting Final Camera-Ready Copy}

The final versions of papers accepted for publication should follow the
same format and naming convention as initial submissions, except that
author information (names and affiliations) should be given. See
\cref{final author} for formatting instructions.

The footnote, ``Preliminary work. Under review by the International
Conference on Machine Learning (ICML). Do not distribute.'' must be
modified to ``\textit{Proceedings of the
$\mathit{39}^{th}$ International Conference on Machine Learning},
Baltimore, Maryland, USA, PMLR 162, 2022.
Copyright 2022 by the author(s).''

For those using the \textbf{\LaTeX} style file, this change (and others) is
handled automatically by simply changing
$\mathtt{\backslash usepackage\{icml2022\}}$ to
$$\mathtt{\backslash usepackage[accepted]\{icml2022\}}$$
Authors using \textbf{Word} must edit the
footnote on the first page of the document themselves.

Camera-ready copies should have the title of the paper as running head
on each page except the first one. The running title consists of a
single line centered above a horizontal rule which is $1$~point thick.
The running head should be centered, bold and in $9$~point type. The
rule should be $10$~points above the main text. For those using the
\textbf{\LaTeX} style file, the original title is automatically set as running
head using the \texttt{fancyhdr} package which is included in the ICML
2022 style file package. In case that the original title exceeds the
size restrictions, a shorter form can be supplied by using

\verb|\icmltitlerunning{...}|

just before $\mathtt{\backslash begin\{document\}}$.
Authors using \textbf{Word} must edit the header of the document themselves.

\section{Format of the Paper}

All submissions must follow the specified format.

\subsection{Dimensions}

The text of the paper should be formatted in two columns, with an
overall width of 6.75~inches, height of 9.0~inches, and 0.25~inches
between the columns. The left margin should be 0.75~inches and the top
margin 1.0~inch (2.54~cm). The right and bottom margins will depend on
whether you print on US letter or A4 paper, but all final versions
must be produced for US letter size.
Do not write anything on the margins.

The paper body should be set in 10~point type with a vertical spacing
of 11~points. Please use Times typeface throughout the text.

\subsection{Title}

The paper title should be set in 14~point bold type and centered
between two horizontal rules that are 1~point thick, with 1.0~inch
between the top rule and the top edge of the page. Capitalize the
first letter of content words and put the rest of the title in lower
case.

\subsection{Author Information for Submission}
\label{author info}

ICML uses double-blind review, so author information must not appear. If
you are using \LaTeX\/ and the \texttt{icml2022.sty} file, use
\verb+\icmlauthor{...}+ to specify authors and \verb+\icmlaffiliation{...}+ to specify affiliations. (Read the TeX code used to produce this document for an example usage.) The author information
will not be printed unless \texttt{accepted} is passed as an argument to the
style file.
Submissions that include the author information will not
be reviewed.

\subsubsection{Self-Citations}

If you are citing published papers for which you are an author, refer
to yourself in the third person. In particular, do not use phrases
that reveal your identity (e.g., ``in previous work \cite{langley00}, we
have shown \ldots'').

Do not anonymize citations in the reference section. The only exception are manuscripts that are
not yet published (e.g., under submission). If you choose to refer to
such unpublished manuscripts \cite{anonymous}, anonymized copies have
to be submitted
as Supplementary Material via CMT\@. However, keep in mind that an ICML
paper should be self contained and should contain sufficient detail
for the reviewers to evaluate the work. In particular, reviewers are
not required to look at the Supplementary Material when writing their
review (they are not required to look at more than the first $8$ pages of the submitted document).

\subsubsection{Camera-Ready Author Information}
\label{final author}

If a paper is accepted, a final camera-ready copy must be prepared.
For camera-ready papers, author information should start 0.3~inches below the
bottom rule surrounding the title. The authors' names should appear in 10~point
bold type, in a row, separated by white space, and centered. Author names should
not be broken across lines. Unbolded superscripted numbers, starting 1, should
be used to refer to affiliations.

Affiliations should be numbered in the order of appearance. A single footnote
block of text should be used to list all the affiliations. (Academic
affiliations should list Department, University, City, State/Region, Country.
Similarly for industrial affiliations.)

Each distinct affiliations should be listed once. If an author has multiple
affiliations, multiple superscripts should be placed after the name, separated
by thin spaces. If the authors would like to highlight equal contribution by
multiple first authors, those authors should have an asterisk placed after their
name in superscript, and the term ``\textsuperscript{*}Equal contribution"
should be placed in the footnote block ahead of the list of affiliations. A
list of corresponding authors and their emails (in the format Full Name
\textless{}email@domain.com\textgreater{}) can follow the list of affiliations.
Ideally only one or two names should be listed.

A sample file with author names is included in the ICML2022 style file
package. Turn on the \texttt{[accepted]} option to the stylefile to
see the names rendered. All of the guidelines above are implemented
by the \LaTeX\ style file.

\subsection{Abstract}

The paper abstract should begin in the left column, 0.4~inches below the final
address. The heading `Abstract' should be centered, bold, and in 11~point type.
The abstract body should use 10~point type, with a vertical spacing of
11~points, and should be indented 0.25~inches more than normal on left-hand and
right-hand margins. Insert 0.4~inches of blank space after the body. Keep your
abstract brief and self-contained, limiting it to one paragraph and roughly 4--6
sentences. Gross violations will require correction at the camera-ready phase.

\subsection{Partitioning the Text}

You should organize your paper into sections and paragraphs to help
readers place a structure on the material and understand its
contributions.

\subsubsection{Sections and Subsections}

Section headings should be numbered, flush left, and set in 11~pt bold
type with the content words capitalized. Leave 0.25~inches of space
before the heading and 0.15~inches after the heading.

Similarly, subsection headings should be numbered, flush left, and set
in 10~pt bold type with the content words capitalized. Leave
0.2~inches of space before the heading and 0.13~inches afterward.

Finally, subsubsection headings should be numbered, flush left, and
set in 10~pt small caps with the content words capitalized. Leave
0.18~inches of space before the heading and 0.1~inches after the
heading.

Please use no more than three levels of headings.

\subsubsection{Paragraphs and Footnotes}

Within each section or subsection, you should further partition the
paper into paragraphs. Do not indent the first line of a given
paragraph, but insert a blank line between succeeding ones.

You can use footnotes\footnote{Footnotes
should be complete sentences.} to provide readers with additional
information about a topic without interrupting the flow of the paper.
Indicate footnotes with a number in the text where the point is most
relevant. Place the footnote in 9~point type at the bottom of the
column in which it appears. Precede the first footnote in a column
with a horizontal rule of 0.8~inches.\footnote{Multiple footnotes can
appear in each column, in the same order as they appear in the text,
but spread them across columns and pages if possible.}

\begin{figure}[ht]
\vskip 0.2in
\begin{center}
\centerline{\includegraphics[width=\columnwidth]{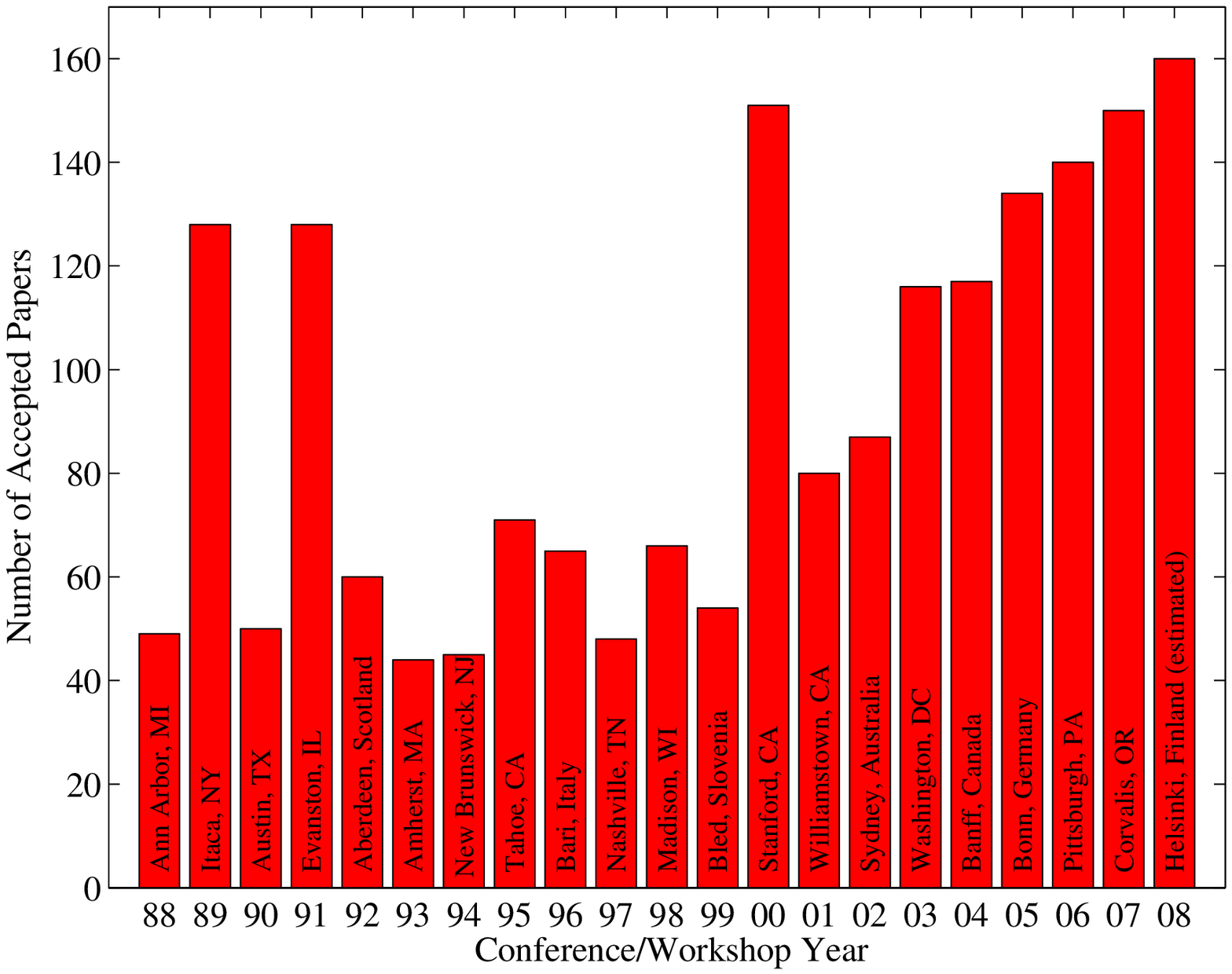}}
\caption{Historical locations and number of accepted papers for International
Machine Learning Conferences (ICML 1993 -- ICML 2008) and International
Workshops on Machine Learning (ML 1988 -- ML 1992). At the time this figure was
produced, the number of accepted papers for ICML 2008 was unknown and instead
estimated.}
\label{icml-historical}
\end{center}
\vskip -0.2in
\end{figure}

\subsection{Figures}

You may want to include figures in the paper to illustrate
your approach and results. Such artwork should be centered,
legible, and separated from the text. Lines should be dark and at
least 0.5~points thick for purposes of reproduction, and text should
not appear on a gray background.

Label all distinct components of each figure. If the figure takes the
form of a graph, then give a name for each axis and include a legend
that briefly describes each curve. Do not include a title inside the
figure; instead, the caption should serve this function.

Number figures sequentially, placing the figure number and caption
\emph{after} the graphics, with at least 0.1~inches of space before
the caption and 0.1~inches after it, as in
\cref{icml-historical}. The figure caption should be set in
9~point type and centered unless it runs two or more lines, in which
case it should be flush left. You may float figures to the top or
bottom of a column, and you may set wide figures across both columns
(use the environment \texttt{figure*} in \LaTeX). Always place
two-column figures at the top or bottom of the page.

\subsection{Algorithms}

If you are using \LaTeX, please use the ``algorithm'' and ``algorithmic''
environments to format pseudocode. These require
the corresponding stylefiles, algorithm.sty and
algorithmic.sty, which are supplied with this package.
\cref{alg:example} shows an example.

\begin{algorithm}[tb]
   \caption{Bubble Sort}
   \label{alg:example}
\begin{algorithmic}
   \STATE {\bfseries Input:} data $x_i$, size $m$
   \REPEAT
   \STATE Initialize $noChange = true$.
   \FOR{$i=1$ {\bfseries to} $m-1$}
   \IF{$x_i > x_{i+1}$}
   \STATE Swap $x_i$ and $x_{i+1}$
   \STATE $noChange = false$
   \ENDIF
   \ENDFOR
   \UNTIL{$noChange$ is $true$}
\end{algorithmic}
\end{algorithm}

\subsection{Tables}

You may also want to include tables that summarize material. Like
figures, these should be centered, legible, and numbered consecutively.
However, place the title \emph{above} the table with at least
0.1~inches of space before the title and the same after it, as in
\cref{sample-table}. The table title should be set in 9~point
type and centered unless it runs two or more lines, in which case it
should be flush left.


\begin{table}[t]
\caption{Classification accuracies for naive Bayes and flexible
Bayes on various data sets.}
\label{sample-table}
\vskip 0.15in
\begin{center}
\begin{small}
\begin{sc}
\begin{tabular}{lcccr}
\toprule
Data set & Naive & Flexible & Better? \\
\midrule
Breast    & 95.9$\pm$ 0.2& 96.7$\pm$ 0.2& $\surd$ \\
Cleveland & 83.3$\pm$ 0.6& 80.0$\pm$ 0.6& $\times$\\
Glass2    & 61.9$\pm$ 1.4& 83.8$\pm$ 0.7& $\surd$ \\
Credit    & 74.8$\pm$ 0.5& 78.3$\pm$ 0.6&         \\
Horse     & 73.3$\pm$ 0.9& 69.7$\pm$ 1.0& $\times$\\
Meta      & 67.1$\pm$ 0.6& 76.5$\pm$ 0.5& $\surd$ \\
Pima      & 75.1$\pm$ 0.6& 73.9$\pm$ 0.5&         \\
Vehicle   & 44.9$\pm$ 0.6& 61.5$\pm$ 0.4& $\surd$ \\
\bottomrule
\end{tabular}
\end{sc}
\end{small}
\end{center}
\vskip -0.1in
\end{table}

Tables contain textual material, whereas figures contain graphical material.
Specify the contents of each row and column in the table's topmost
row. Again, you may float tables to a column's top or bottom, and set
wide tables across both columns. Place two-column tables at the
top or bottom of the page.

\subsection{Theorems and such}
The preferred way is to number definitions, propositions, lemmas, etc. consecutively, within sections, as shown below.
\begin{definition}
\label{def:inj}
A function $f:X \to Y$ is injective if for any $x,y\in X$ different, $f(x)\ne f(y)$.
\end{definition}
Using \cref{def:inj} we immediate get the following result:
\begin{proposition}
If $f$ is injective mapping a set $X$ to another set $Y$, 
the cardinality of $Y$ is at least as large as that of $X$
\end{proposition}
\begin{proof} 
Left as an exercise to the reader. 
\end{proof}
\cref{lem:usefullemma} stated next will prove to be useful.
\begin{lemma}
\label{lem:usefullemma}
For any $f:X \to Y$ and $g:Y\to Z$ injective functions, $f \circ g$ is injective.
\end{lemma}
\begin{theorem}
\label{thm:bigtheorem}
If $f:X\to Y$ is bijective, the cardinality of $X$ and $Y$ are the same.
\end{theorem}
An easy corollary of \cref{thm:bigtheorem} is the following:
\begin{corollary}
If $f:X\to Y$ is bijective, 
the cardinality of $X$ is at least as large as that of $Y$.
\end{corollary}
\begin{assumption}
The set $X$ is finite.
\label{ass:xfinite}
\end{assumption}
\begin{remark}
According to some, it is only the finite case (cf. \cref{ass:xfinite}) that is interesting.
\end{remark}

\subsection{Citations and References}

Please use APA reference format regardless of your formatter
or word processor. If you rely on the \LaTeX\/ bibliographic
facility, use \texttt{natbib.sty} and \texttt{icml2022.bst}
included in the style-file package to obtain this format.

Citations within the text should include the authors' last names and
year. If the authors' names are included in the sentence, place only
the year in parentheses, for example when referencing Arthur Samuel's
pioneering work \yrcite{Samuel59}. Otherwise place the entire
reference in parentheses with the authors and year separated by a
comma \cite{Samuel59}. List multiple references separated by
semicolons \cite{kearns89,Samuel59,mitchell80}. Use the `et~al.'
construct only for citations with three or more authors or after
listing all authors to a publication in an earlier reference \cite{MachineLearningI}.

Authors should cite their own work in the third person
in the initial version of their paper submitted for blind review.
Please refer to \cref{author info} for detailed instructions on how to
cite your own papers.

Use an unnumbered first-level section heading for the references, and use a
hanging indent style, with the first line of the reference flush against the
left margin and subsequent lines indented by 10 points. The references at the
end of this document give examples for journal articles \cite{Samuel59},
conference publications \cite{langley00}, book chapters \cite{Newell81}, books
\cite{DudaHart2nd}, edited volumes \cite{MachineLearningI}, technical reports
\cite{mitchell80}, and dissertations \cite{kearns89}.

Alphabetize references by the surnames of the first authors, with
single author entries preceding multiple author entries. Order
references for the same authors by year of publication, with the
earliest first. Make sure that each reference includes all relevant
information (e.g., page numbers).

Please put some effort into making references complete, presentable, and
consistent, e.g. use the actual current name of authors.
If using bibtex, please protect capital letters of names and
abbreviations in titles, for example, use \{B\}ayesian or \{L\}ipschitz
in your .bib file.

\section*{Accessibility}
Authors are kindly asked to make their submissions as accessible as possible for everyone including people with disabilities and sensory or neurological differences.
Tips of how to achieve this and what to pay attention to will be provided on the conference website \url{http://icml.cc/}.

\section*{Software and Data}

If a paper is accepted, we strongly encourage the publication of software and data with the
camera-ready version of the paper whenever appropriate. This can be
done by including a URL in the camera-ready copy. However, \textbf{do not}
include URLs that reveal your institution or identity in your
submission for review. Instead, provide an anonymous URL or upload
the material as ``Supplementary Material'' into the CMT reviewing
system. Note that reviewers are not required to look at this material
when writing their review.

\section*{Acknowledgements}

\textbf{Do not} include acknowledgements in the initial version of
the paper submitted for blind review.

If a paper is accepted, the final camera-ready version can (and
probably should) include acknowledgements. In this case, please
place such acknowledgements in an unnumbered section at the
end of the paper. Typically, this will include thanks to reviewers
who gave useful comments, to colleagues who contributed to the ideas,
and to funding agencies and corporate sponsors that provided financial
support.

\nocite{langley00}
}
\bibliography{reference}
\bibliographystyle{icml2022}

\newpage
\appendix
\onecolumn
The project webpage is at \url{https://sites.google.com/view/pair-gcrl}.

\section{Missing Proofs in \cref{sec:theory}}

\label{app:proofexp}

In this section, we prove \cref{thm:exp}.

Here, we make the following assumptions to facilitate our analysis. First, we consider a goal-conditioned MDP with deterministic transition. We assume that the goal set is the same as the state space, i.e., $\mathcal G = \mathcal S$. Furthermore, we consider sparse-reward environment where the reward function is defined by $r(s, a, g) = \mathbb I\{s=g\}$. 

We note that our framework draw on-policy trajectories with random state-goal pairs. To further simplify our theoretical analysis, we assume that each state-goal pair is sampled at least once in each iteration. Here, we may meet two practical issues. First, the state space may be continuous, which makes it impossible to sample each state-goal pair once. This may fit into our theoretical analysis by discretizing the state space. Second, $N_{\mathrm{online}}$ may not be large enough. This fits into our analysis by merging several consecutive iterations. 

We use $\pi^{(k)}$ to denote the policy by the end of the $k$-th iteration, and $\pi^{(0)}$ denotes the initial policy. We assume that the initial policy $\pi^{(0)}$ can reach any one-step goal. Specifically, if $P(s'|s, a)=1$ then $\pi^{(0)}(a | s, g=s')=1$. 

For a state $s$, a goal $g$, and a (deterministic) policy $\pi$, we define 
\begin{align}
	s \xrightarrow{\pi} g := \begin{cases}
		\text{true}, &\Pr[\exists t \ge 0 : s_t = g \mid s_0 = s, \forall i \ge 0, a_i \sim \pi(\cdot | s_i, g), s_{i+1} \sim P(\cdot | s_i, a_i)] = 1, \\
		\text{false}, & \text{otherwise}.
		\end{cases}
\end{align}
and we define 
\begin{align}
	d(s, s') &:= \min \{t : \exists \mathrm{ policy } \pi \text{ such that } \Pr[s_t = s' \mid s_0 = s, \forall i \ge 0, a_i \sim \pi(\cdot | s_i, g), s_{i+1} \sim P(\cdot | s_i, a_i)] =1\}, \\ 
	\ell_k &:= \sup_{\ell} \{\forall s, g : d(s, g) \le \ell : s \xrightarrow{\pi^{(k)}} g \text{ is true}\}.
\end{align}
Finally, we let 
\begin{align}
	D = \max_{s, s' \in \mathcal S} \{d(s, s') : d(s, s') < +\infty\} \label{eq:diameter}
\end{align}
be the maximum possible length of trajectory from state $s$ to goal $s'$ such that $s'$ is reachable from $s$.

\begin{lemma} \label{lem:reduc-1} For the PAIR framework,  we have $\ell_k \ge 2 \ell_{k-1}$ for every iteration $k$. 
\end{lemma}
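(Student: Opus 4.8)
The plan is to show that after iteration $k$, the policy $\pi^{(k)}$ can reach any goal $g$ from any state $s$ whenever $d(s,g) \le 2\ell_{k-1}$, given that by the induction hypothesis $\pi^{(k-1)}$ can reach any goal within distance $\ell_{k-1}$. The key observation is that task reduction, combined with the offline supervised learning phase, converts any reachable $(s,g)$ pair with $d(s,g) \le 2\ell_{k-1}$ into a successful demonstration that is then imitated, so $\pi^{(k)}$ inherits the ability to solve it. Since, by the sampling assumption, every state-goal pair is attempted in iteration $k$, it suffices to exhibit, for each such pair, a successful trajectory placed into the dataset $\mathcal{D}$.

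First I would fix a pair $(s,g)$ with $d(s,g) = \ell \le 2\ell_{k-1}$ and let $s = s_0, s_1, \dots, s_\ell = g$ be a shortest path realizing $d(s,g)$. Choose the midpoint $s_B := s_{\lceil \ell/2 \rceil}$. Then $d(s, s_B) \le \lceil \ell/2 \rceil \le \ell_{k-1}$ and $d(s_B, g) \le \lfloor \ell/2 \rfloor \le \ell_{k-1}$ (using $\ell \le 2\ell_{k-1}$). By the induction hypothesis, $s \xrightarrow{\pi^{(k-1)}} s_B$ and $s_B \xrightarrow{\pi^{(k-1)}} g$ are both true. Hence during the online RL phase of iteration $k$, when the policy fails on $(s,g)$ directly, the task reduction subroutine searches over sub-goals using the value function; I would argue that since the composed value $V_\psi(s, s_B) \oplus V_\psi(s_B, g)$ is maximized (or near-maximized) at a sub-goal that is actually solvable by the current policy — and $s_B$ witnesses that such a solvable sub-goal exists — task reduction returns a sub-goal through which both legs succeed when rolled out in the (deterministic) environment. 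Concatenating the two successful sub-trajectories yields a successful demonstration for $(s,g)$, which is added to $\mathcal{D}$. The offline SL phase then performs behavior cloning on $\mathcal{D}$; appealing to the correctness of imitation (in the spirit of \cref{thm:correct2}, deterministic transitions and the fact that the imitated trajectories are themselves consistent successful demonstrations), $\pi^{(k)}$ reproduces a successful trajectory for every such $(s,g)$. Therefore $d(s,g) \le 2\ell_{k-1}$ implies $s \xrightarrow{\pi^{(k)}} g$, which gives $\ell_k \ge 2\ell_{k-1}$.

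The main obstacle I anticipate is making rigorous the claim that the value-function-based sub-goal search in Eq.~(\ref{eq:tr}) actually returns a sub-goal that both sub-tasks can solve. The existence of a good sub-goal ($s_B$ above) does not by itself guarantee the $\arg\max$ picks one; one needs either an idealized assumption that $V_\psi$ exactly represents reachability (so the composed value is $1$ precisely on solvable decompositions and $<1$ otherwise) or a stated assumption that the search succeeds whenever a valid decomposition exists. I would handle this by invoking the "mild assumptions" clause — specifically assuming $V_\psi(s,g)$ correctly identifies whether $s \xrightarrow{\pi^{(k-1)}} g$ holds, which is reasonable in the tabular deterministic setting after sufficient RL updates — so that the $\arg\max$ over sub-goals necessarily lands on a decomposition solvable by $\pi^{(k-1)}$. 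A secondary, more minor point is ensuring the induction base case: $\ell_0 \ge 1$ follows from the assumption that $\pi^{(0)}$ reaches every one-step goal.
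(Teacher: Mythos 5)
Your proposal is correct and follows essentially the same route as the paper's proof: split any pair with $d(s,g)\le 2\ell_{k-1}$ at a midpoint solvable on both legs by $\pi^{(k-1)}$ (inductive hypothesis), argue task reduction recovers such a decomposition because the composed value equals $1$ exactly on solvable sub-goals, and let the SL step transfer the resulting demonstration into $\pi^{(k)}$. The caveat you flag about the $\arg\max$ in Eq.~(\ref{eq:tr}) needing an idealized value function is real, and the paper's own proof handles it the same way you propose, by implicitly assuming $V(s,s')=V(s',g)=1$ for reachable sub-goals.
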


\begin{proof} We prove by induction. For every state-goal pair $s, g$ such that $d(s, g) \le 2\ell_{k-1}$, we can find some state $s'$ such that $d(s, s') \le \ell_{k-1}$ and $d(s', g) \le \ell_{k-1}$. By the inductive hypothesis, we have that $s \xrightarrow{\pi^{(k)}} s'$ and $s' \xrightarrow{\pi^{(k)}} g$ are true. 
	
	Now we claim that every $(s, g) \in \mathcal S \times \mathcal G$ would have a success trajectory after the $k$-th iteration. Note that by our assumption, the framework would roll out a trajectory from $s$ to $g$ using $\pi(\cdot|s, g)$. If it succeeds, then we get a trajectory, else task reduction (cf. \cref{sec:task-reduction}) would find the state $s'$ such that $s \xrightarrow{\pi^{(k-1)}} s'$ and $s' \xrightarrow{\pi^{(k-1)}} g$ are true, because $V(s, s') = V(s', g) = 1$. Therefore, task reduction would run $\pi^{(k-1)}(\cdot|s, g=s')$ followed by $\pi^{(k-1)}(\cdot|s', g)$,  and get a successful trajectory from $s$ to $g$. Finally, the SL step would learn the policy $\pi^{(k)}$ such that $s\xrightarrow{\pi^{(k)}}g$ is true.  
\end{proof}

\begin{lemma} \label{lem:reduc-2} The PAIR framework uses at most $O(\lvert \mathcal S \rvert^2 \log D )$ samples to learn a policy that could go from any state to any reachable goal. 
\end{lemma}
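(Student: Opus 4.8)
The plan is to combine the geometric growth of $\ell_k$ from \cref{lem:reduc-1} with a per-iteration sample count. First I would pin down the base case: by the assumption that $\pi^{(0)}$ reaches every one-step goal, any pair $(s,g)$ with $d(s,g)\le 1$ satisfies that $s\xrightarrow{\pi^{(0)}} g$ is true (the case $d(s,g)=0$ being trivial), so $\ell_0 \ge 1$. Iterating \cref{lem:reduc-1} then gives $\ell_k \ge 2^k\ell_0 \ge 2^k$ for every $k$.

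Next I would turn this into a bound on the number of iterations. Recall that $D = \max\{d(s,s') : d(s,s') < +\infty\}$ from \cref{eq:diameter} is the largest finite state-to-goal distance, so a goal $g$ is reachable from $s$ exactly when $d(s,g)\le D$. Once $\ell_k \ge D$, the definition of $\ell_k$ yields that $s\xrightarrow{\pi^{(k)}} g$ is true for every such pair, i.e. $\pi^{(k)}$ drives any state to any reachable goal. Since $\ell_k \ge 2^k$, it suffices to run $K = \lceil \log_2 D\rceil$ iterations to guarantee $\ell_K \ge D$.

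Finally I would count the samples. By the simplifying assumption, each iteration rolls out at least one trajectory for each of the $\lvert\mathcal S\rvert\cdot\lvert\mathcal G\rvert = \lvert\mathcal S\rvert^2$ state-goal pairs; when the direct rollout from $(s,g)$ fails, task reduction (as used in the proof of \cref{lem:reduc-1}) adds only a constant number of extra rollouts for that pair (the two sub-trajectories through the intermediate $s'$), while the search over the learned $V$ and the SL update consume no further environment interaction. Hence each iteration costs $O(\lvert\mathcal S\rvert^2)$ rollouts, and over $K = O(\log D)$ iterations the total is $O(\lvert\mathcal S\rvert^2\log D)$, as claimed.

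The main point to handle carefully is the bookkeeping around what counts as a "sample" (here, episodes/rollouts rather than individual transitions) and the monotonicity needed to chain \cref{lem:reduc-1}: one should note that $\ell_k$ is non-decreasing, so coverage gained in earlier iterations is not lost, and that in the boundary regime where $2\ell_{k-1}$ exceeds $D$ the bound of \cref{lem:reduc-1} should simply be read as $\ell_k \ge D$ — all reachable pairs covered — which is exactly what the conclusion requires. Beyond that, the argument is a direct substitution.
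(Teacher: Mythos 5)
Your proposal is correct and follows essentially the same route as the paper: iterate \cref{lem:reduc-1} from $\ell_0 = 1$ to get $\ell_k \ge 2^k$, conclude $O(\log D)$ iterations suffice to reach $\ell_k \ge D$, and multiply by the $O(\lvert \mathcal S\rvert^2)$ rollouts per iteration. Your extra care about the boundary case $2\ell_{k-1} > D$ and the constant overhead of task-reduction rollouts is a welcome tightening of the same argument.
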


\begin{proof} By the definition of $\ell_k$, we know that $\pi^{(k)}$ could go from any state to any reachable goal if $\ell_k \ge D$. By our assumption, we have $\ell_0 = 1$. Therefore, by \cref{lem:reduc-1}, we have $\ell_k \ge D$ after $k = O(\log D)$, i.e., PAIR uses at most $O(\log D)$ iterations.
	
	By our assumption, PAIR uses $O( \lvert \mathcal S \rvert \lvert \mathcal G \rvert) = O(\lvert \mathcal S \rvert^2)$ samples, because each state-goal pair is sampled for constant number of times. Thus we conclude that PAIR uses at most $O(\lvert \mathcal S \rvert^2 \log D )$ samples in total.
\end{proof}

\begin{lemma} Without PAIR, under assumptions described in \cref{app:proofexp}, for hard tasks, with high probability, the algorithm needs $\Omega(\lvert \mathcal S \rvert^2 D)$ samples to learn a policy that could go from any state to any reachable goal. \label{lem:reduc-3}
\end{lemma}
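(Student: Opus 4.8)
The plan is to prove the lower bound by constructing an explicit hard instance and then establishing an additive ``slow-growth'' counterpart of \cref{lem:reduc-1} for the relabeling-only framework. For the instance I would take $\mathcal S = \mathcal G$ to be a path graph on $m := \lvert \mathcal S \rvert$ vertices with a constant-size action set and deterministic transitions, so that $d(\cdot,\cdot)$ is graph distance and the diameter in \cref{eq:diameter} is $D = m-1 = \Theta(\lvert\mathcal S\rvert)$ (if one wants $\lvert\mathcal S\rvert$ and $D$ decoupled, append $m-D-1$ disconnected dummy states, which does not change the per-iteration sample count). The hard task is the pair $(s^\star,g^\star)$ at the two endpoints, so $d(s^\star,g^\star)=D$. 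As in \cref{app:proofexp}, $\pi^{(0)}$ is correct on all one-step goals, and on every state-goal pair the policy has not yet been trained on I let its behaviour be adversarial up to a small exploration noise, chosen so that within one episode a rollout can leave the set of goals the current policy already reaches by at most a constant distance with overwhelming probability.

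The core step is the analogue of \cref{lem:reduc-1}: without task reduction the only data augmentation is goal relabeling, and a relabeled trajectory can certify reachability of a goal $g$ from a state $s$ only if some rollout of $\pi^{(k-1)}$ actually passed through $s$ and later reached $g$. On the path instance there are no shortcuts, so such a rollout must physically traverse the distance $d(s,g)$; since $\pi^{(k-1)}$ navigates reliably only inside its current frontier (goals at distance $\le \ell_{k-1}$) and the chosen out-of-distribution behaviour makes no reliable progress beyond it, a rollout extends the frontier by at most a constant $c$ coming from exploration noise. Hence $\ell_k \le \ell_{k-1} + c$, in contrast to the doubling $\ell_k \ge 2\ell_{k-1}$ that task reduction buys in \cref{lem:reduc-1}, precisely because task reduction splices two independent learned sub-rollouts rather than relying on a single one.

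Chaining from the base case $\ell_0 = 1$ gives $\ell_k \le 1 + ck$, so the event $s^\star \xrightarrow{\pi^{(k)}} g^\star$ --- and a fortiori ``$\pi^{(k)}$ reaches every reachable goal'' --- cannot occur until $k = \Omega(D)$. By the same sampling assumption used in \cref{lem:reduc-2}, each iteration spends $\Omega(\lvert\mathcal S\rvert\lvert\mathcal G\rvert) = \Omega(\lvert\mathcal S\rvert^2)$ environment samples, so the algorithm consumes $\Omega(\lvert\mathcal S\rvert^2 D)$ samples before it can solve the hard task. The ``with high probability'' qualifier is obtained by a union bound: in each of the first $O(D)$ iterations the frontier fails to grow by more than $c$ except with failure probability exponentially small in the exploration parameters, so all of these events hold simultaneously with probability $1 - o(1)$.

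The main obstacle is making the slow-growth lemma airtight --- specifically, ruling out that a rare but favourable sequence of exploration steps carries a single rollout a long way across the path and lets relabeling ``teleport'' the frontier in one iteration. I would control this with two design choices: (i) tight distances and no shortcuts in the instance, so that reaching a far goal genuinely requires a long trajectory, and (ii) an exploration schedule for which the per-episode displacement beyond the learned frontier concentrates at $O(1)$ (a standard annealed $\epsilon$-greedy noise suffices, with the union bound over $O(D)$ iterations costing only a logarithmic factor in the failure probability, which is absorbed). A minor additional point is that $\ell_k$ is defined through a universal quantifier over pairs within a given distance, so isolated progress on a few far pairs need not move $\ell_k$; tracking the single hard pair $(s^\star,g^\star)$ throughout, rather than $\ell_k$ itself, sidesteps this bookkeeping.
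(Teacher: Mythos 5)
Your proposal matches the paper's proof in all essentials: both construct a path-like hard instance of diameter $D$, argue that without task reduction the reachability frontier toward the far goal advances by only $O(1)$ per iteration (in contrast to the doubling of \cref{lem:reduc-1}), and multiply the resulting $\Omega(D)$ iteration count by the $\Omega(\lvert\mathcal S\rvert^2)$ rollouts per iteration. The one substantive difference is the concentration step: the paper bounds the \emph{expected} per-iteration advance by $O(1)$ via the geometric tail $\Pr[\text{advance}=j]\le O(\lvert\mathcal A\rvert^{-j})$ and then applies Azuma--Hoeffding, whereas your per-iteration high-probability constant bound followed by a union bound over $O(D)$ iterations would force that constant to grow like $\log D$ and, as written, only yields $\Omega(\lvert\mathcal S\rvert^2 D/\log D)$.
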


\begin{proof} We consider the following hard task. Suppose $s_0, a_0, \cdots, s_D$ is a trajectory of length $D$ that maximizes Eq. (\ref{eq:diameter}). We assume that the initial policy $\pi^{(0)}$ is a uniformly random policy and we assume $\Pr[\pi^{(0)}(\cdot | s_i, g=s_D)] ]\propto \lvert \mathcal A \rvert^{-1}$ for $i \le D-2$.\footnote{Actually, we can construct such an MDP. Assume we have $(D+2)$ states $\mathcal S = \{s_0, \cdots, s_D, s_{D+1}\}$ and $2$ actions $\mathcal A = \{0, 1\}$. The transition is given by $P(s_{D+1} | s_i, a=0) = 1$ and $P(s_{i+1} | s_i, a=1) = 1$ for $0 \le i \le D$. Such an MDP can similarly be constructed for $\lvert \mathcal A \rvert \ge 3$. } We observe that for the SL framework, if $s_i \xrightarrow{\pi^{(k)}} s_D$ is true then the dataset must contain a trajectory from $s_i$ to $s_D$, and thus the policy $\pi^{(k)}$ learned by SL would satisfy $s_{j} \xrightarrow{\pi^{(k)}} s_D$ for every $j \ge i$. For this, we analyze the sample complexity by analyzing how many iterations would be enough for the algorithm to learn a policy from state $s_0$ to goal $s_D$. To facilitate analysis, we define $i^{(k)}$ to be the smallest number such that $s_{i^{(k)}} \xrightarrow{\pi^{(k)}} s_D$. Then the algorithm learns the desired policy after $k$ iterations only if $i^(k) =0$. 
	
Next, we prove that $\mathbb E[i^{(k-1)} - i^{(k)} | i^{(k-1)}] \le O(1)$. To change $i^{(k-1)}$, the algorithm must roll out a success trajectory with goal $s_D$ from some $s_j$ with $j \le i$. Note that 
\begin{align}
	\Pr[i^{(k-1)} - i^{(k)} = j | i^{(k-1)}] &\le \Pr[\text{algorithm rolls out a success trajectory from }s_{i^{(k-1)}-j} \text{ to } s_D] \\
	&\le \Pr[\pi^{(k-1)} \text{ could go from }s_{i^{(k-1)}-j}\text{ to }s_{i^{(k-1)}}] \\ 
	&\le O(\lvert \mathcal A \rvert^{-j}),
\end{align}
so 
\begin{align}
	\mathbb E[i^{(k-1)} - i^{(k)} | i^{(k-1)}] &= \sum_{j=0}^{i^{(k-1)}} j\cdot \Pr[i^{(k-1)} - i^{(k)} = j | i^{(k-1)}] \\ 
	&\le O( \sum_{j = 0}^\infty j \cdot \lvert \mathcal A \rvert^{-j}) \\ 
	&\le O(\lvert \mathcal A \rvert^{-1}) \\ 
	&\le O(1).
\end{align}
Therefore, by the Azuma-Hoeffding inequality \citep{azuma1967weighted}, we conclude that with high probability $1-\delta$, we have $i^{(k)} < D$ for $k = \Theta(D - \sqrt{D \log \delta^{-1}}) \ge \Omega(D)$, i.e., $k \ge \Omega(D)$ with high probability. Together with that $\lvert \mathcal S\rvert^2$ trajectories are rolled out in each iteration, we conclude that the sample complexity is with high probability at least $\Omega(\lvert \mathcal S \rvert^2 D)$. 
\end{proof}

Finally, we conclude \cref{thm:exp} by collecting Lemmas \ref{lem:reduc-2} and \ref{lem:reduc-3}.
\section{Experiment Details}
\subsection{Environment Description}
In this section, we describe all the environment-specific details regarding MDP definitions.
\subsubsection{Sawyer Push}
``\textit{Push}'' is a robotic pushing environment adopted from~\cite{nair2018visual} simulated with MuJoCo~\cite{todorov2012mujoco} engine. Each episode lasts for at most 50 steps.

\textbf{Observation: } The agent can observe 2-D hand position and 2-D puck position in each step. 

\textbf{Action space: } An action is a 2-D vector denoting the relative movement of the robot hand. The height of the hand and the state of the gripper are fixed. Each dimension ranges from -1 to 1, and is categorized into 3 bins (therefore, there are a total of 9 different actions).

\textbf{Goal and reward: } The puck goal is a 2-D vector denoting the target position on the table. The agent can only get reward when the $l2$ distance between puck and goal is smaller than 5cm.

\textbf{Initial state: } Each episode starts with the hand initialized in a 0.03m $\times$ 0.03m region on the right side of the table, the puck initialized in a 0.07m $\times$ 0.07m square, and the goal is uniformly sampled in a 0.4m $\times$ 0.4m square on the table. 

\subsubsection{Ant Maze}
This environment is adopted from~\cite{nachum2018data}. The scene is a 24m $\times$ 24m maze simulated with MuJoCo~\cite{todorov2012mujoco}. The maximum number of steps for each episode is 500 steps.

\textbf{Observation:} Joint positions and joint velocities of the ant robot. The center of mass of the robot is included in the joint positions.

\textbf{Action:} 8-D real-valued vector controlling the motors on the robot. The actions output from the policy network are clipped to -30 to 30 before sent to the simulator.

\textbf{Goal and reward:} The goals are 2-D vectors denoting the desired $xy$ position of the ant. The agent gets reward when the distance between the ant's center of mass and the goal is smaller than 2m.

\textbf{Initial state:} Both the ant and the goal are uniformly sampled in the empty space (not collided with walls) in the maze. For hard tasks, the ant is initialized at coordinate (0m, 0m), and the goal is at coordinate (0m, 16m).

\subsubsection{Stacking}\label{app:stack_train_detail}
The stacking environment is built with PyBullet~\cite{coumans2016pybullet}. 

\textbf{Observation:} The observation is a concatenation of robot state and objects states. The robot state contains 6-D end effector pose, 3-D end effector linear velocity, 1-D finger position, and 1-D finger velocity. Each object state consists of its 6-D pose, 3-D relative position to the robot's end effector, 3-D linear velocity, 3-D angular velocity and 1-D 0/1 indicator denoting the identity of the target cube.

\textbf{Action: } A concatenation of 3-D relative movement of the end effector and 1-D finger control signal. Each dimension is discretized into 20 bins.

\textbf{Goal and reward: } A goal specifies a 3-D desired position and an identity denoting which cube must be close to the goal spot. A non-zero reward is only given when the target cube is within 3cm distance to the goal, and when the end effector is at least 10cm away from the goal.

\textbf{Adaptive curriculum:}
Since directly training on stacking tasks with large number of objects pose severe exploration challenges, we adopt a simple curriculum on the number of cubes to stack, similar to the one in~\cite{li2020towards}. In the beginning, 70\% of the tasks only need to stack one cube, and the other tasks are uniformly sampled to stack 2 to $N$ cubes. 
Whenever the average success rate reaches 0.6, the curriculum proceeds from sampling ``stack-$n$-cubes'' with 70\% probability to ``stack-($n+1$)-cubes'' tasks. 
We switch to offline SL training when the curriculum proceeds and the success rate for stacking $n+1$ cubes is lower than 0.5. 

\subsection{Combination with Offline RL}
\subsubsection{Offline RL combination in Sawyer Push}
For the warm-start dataset, we collect rollout trajectories generated by random policy, where we keep successful samples and do goal relabelling on failed trajectories. Therefore, the dataset only consists of successful demonstrations. The dataset size for phasic-DT is $55K$ steps, and $15k$ for phasic-AWAC. Phasic-AWAC switches from online to offline training after collecting $200k$ online samples to match the total number of phasic switches in original {\name}. 
The online training of Phasic-DT is based on PPO, therefore we adopt the same number of PPO updates as {\name} for phasic-DT before switching to its offline phase. 

\subsubsection{Additional results in Stack scenario}
We also try to run phasic-DT in the stacking domain. We train phasic-DT starting from the offline phase with a warm-up dataset. The warm-up dataset consists of successful trajectories in ``stack-1-cube" scenario generated by a pretrained PPO model. We similarly adopt adaptive curriculum to alleviate exploration difficulty as when training {\name}. The results are shown in Figure~\ref{fig:app:dt_stack}. The success rate of phasic-DT for stacking 6 cubes grows slowly, and does not converge to a value as high as the original {\name}. 
\begin{figure}
    \centering
    \includegraphics[width=0.3\textwidth]{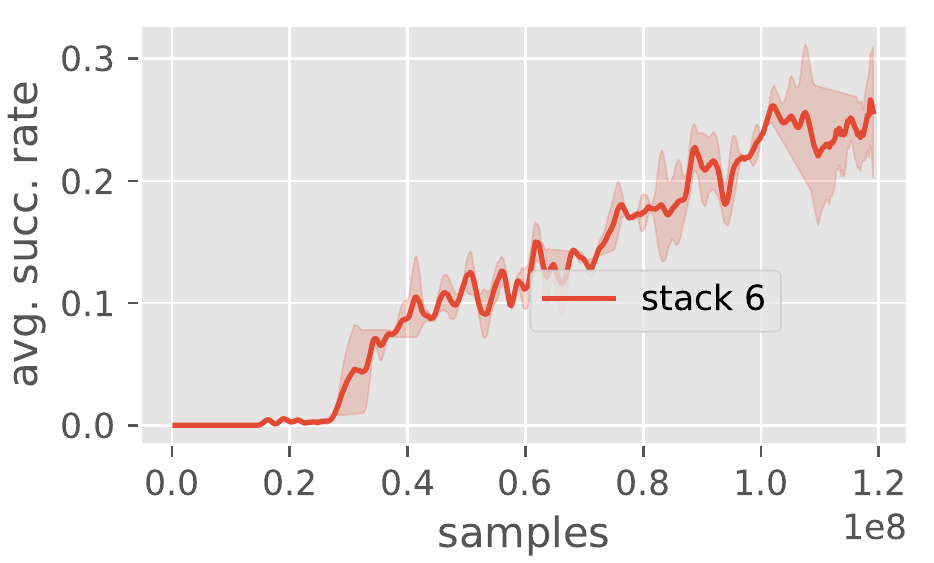}
    \caption{Average success rate of phasic-DT in ``stack-6-cube'' domain.}
    \label{fig:app:dt_stack}
    \vspace{-2mm}
\end{figure}

\subsection{Algorithm Implementations}\label{app:pair_implement}
\textbf{Network architecture:}
We use separate policy and value networks with the same architecture for PPO-based algorithms. The specific network architectures for different domains are as follows. For ``Push'', we use MLPs of hidden size 400 and 300. For ``Ant Maze'', the networks are MLPs of hidden size 256 and 256. For ``Stacking'', we use a transformer~\cite{vaswani2017attention}-based architecture, which stacks 2 self-attention blocks with one head and 64 hidden units, and then goes though linear heads to output action distributions or values. Since there are exponential number of possible actions as we discretize each action dimension into several bins, we assume different action dimensions are independent, and use separate linear heads to predict distributions of different dimensions.

\textbf{{\name} implementation:}
We use PPO~\cite{schulman17ppo} for online training and advantage weighted imitation learning for offline training by default. For PPO training, we use $N_{\textrm{worker}}$ parallel workers to collect transitions $(s, a, r, s')$ from the environments synchronously. When applying the value-difference-based intrinsic reward, $r$ is replaced by the sum of environment reward and the intrinsic reward calculated with the current value network. After each worker gets $N_{\textrm{steps}}$ data points, we run $N_{\textrm{epoch}}$ epochs of PPO training by splitting all the collected on-policy data into $N_{\textrm{batch}}$ batches. The successful trajectories from these on-policy batch are directly stored into the offline data $\mathcal{D}$, and the failed trajectories are cached into a failure buffer $\mathcal{B}_{\textrm{fail}}$. We repeat the data collection - PPO training phase until the total amount of on-policy data reaches $N_{\textrm{online}}$. Then we perform data augmentation (task reduction, goal relabeling) over $\mathcal{B}_{\textrm{fail}}$ so as to generate more successful data, and insert the resulting demonstrations into $\mathcal{D}$ as well. After the dataset $\mathcal{D}$ is constructed, we run $M_{\textrm{epoch}}$ epochs of advantage weighted 
behavior cloning with batch size $m$. We use GAE advantage calculated with the value network learned after the online phase as $(R - V)$ for each data point. We use Adam optimizer~\cite{kingma2015adam} for PPO and supervised learning. All the hyper-parameters are listed in Table~\ref{tab:app:pair_hyper}.  

We do not additionally train the value network using the offline dataset in offline phase since we empirically find that training $V$ does not help the overall performance (see Figure~\ref{fig:app:vtrain}). 
\begin{figure}
    \centering
    \includegraphics[width=0.3\textwidth]{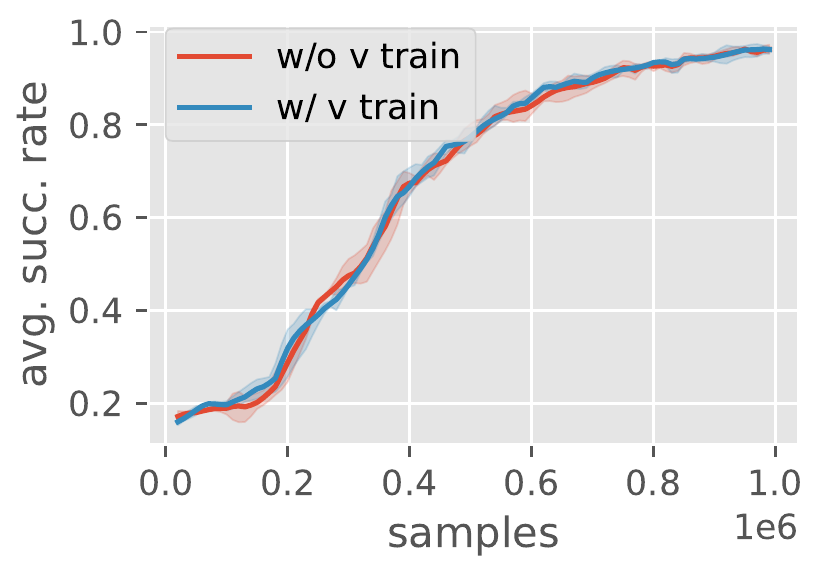}
    \includegraphics[width=0.3\textwidth]{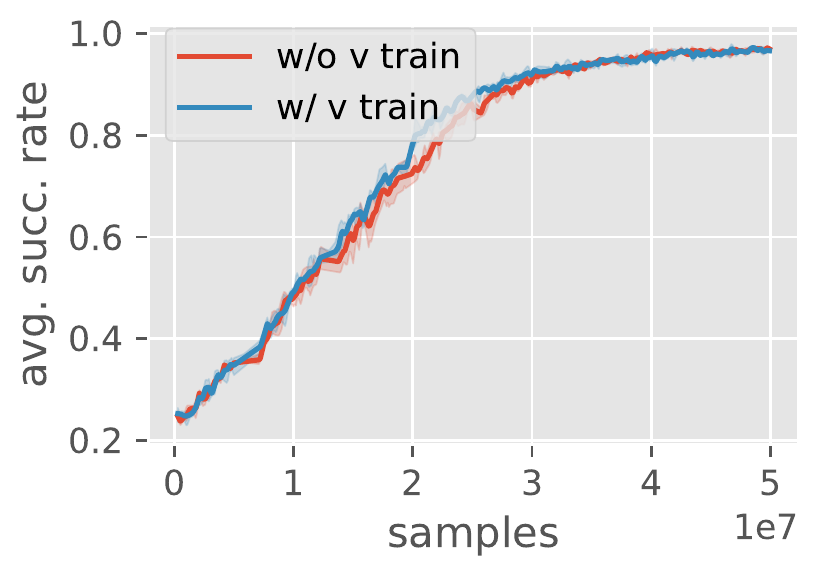}
    \caption{Comparison of training value network or not in {\name} offline phase. Left plot is in ``Push'' domain, right plot is in ``Ant Maze''.}
    \label{fig:app:vtrain}
    \vspace{-2mm}
\end{figure}

\begin{table}[]
    \centering
    \begin{tabular}{lccc}
    \toprule
        Domain & Push & Ant Maze & Stacking \\
    \midrule
        $N_{\textrm{worker}}$ & 4 & 64 & 64 \\
        $N_{\textrm{online}}$ & $10 \cdot 2^{14}$ & $20\cdot 2^{18}$ & adaptive \\
        \hline
        $\alpha$ &
        \multicolumn{3}{c}{0.5} \\
        $N_{\textrm{steps}}$ & \multicolumn{3}{c}{4096} \\
        $N_{\textrm{epoch}}$ & \multicolumn{3}{c}{10}\\
        $N_{\textrm{batch}}$ & \multicolumn{3}{c}{32} \\
        $\beta$ & \multicolumn{3}{c}{1} \\
        $M_{\textrm{epoch}}$ & \multicolumn{3}{c}{10} \\
        $m$ & \multicolumn{3}{c}{64} \\
        lr & \multicolumn{3}{c}{2.5e-4} \\
    \bottomrule  
    \end{tabular}
    \caption{Hyperparameters of {\name}.}
    \label{tab:app:pair_hyper}
    \vspace{-2mm}
\end{table}

\textbf{GCSL baseline implementation:}
There is a slight difference between our implementation and the original version from \cite{ghosh2020learning}: we collect and relabel data in a more phasic fashion, i.e., we perform imitation learning on the data batch collected from only the previous online collection phase. We keep the number of online steps before offline imitation learning the same as {\name}. The original GCSL maintains a data buffer throughout the training process similar to the setting in off-policy RL. We find our phasic GCSL can get better performance than the original version (see Figure~\ref{fig:app:gcsl}), so we present the results of phasic GCSL in the main paper.

\begin{figure}
    \begin{minipage}{0.49\textwidth}
    \centering
    \includegraphics[width=0.6\linewidth]{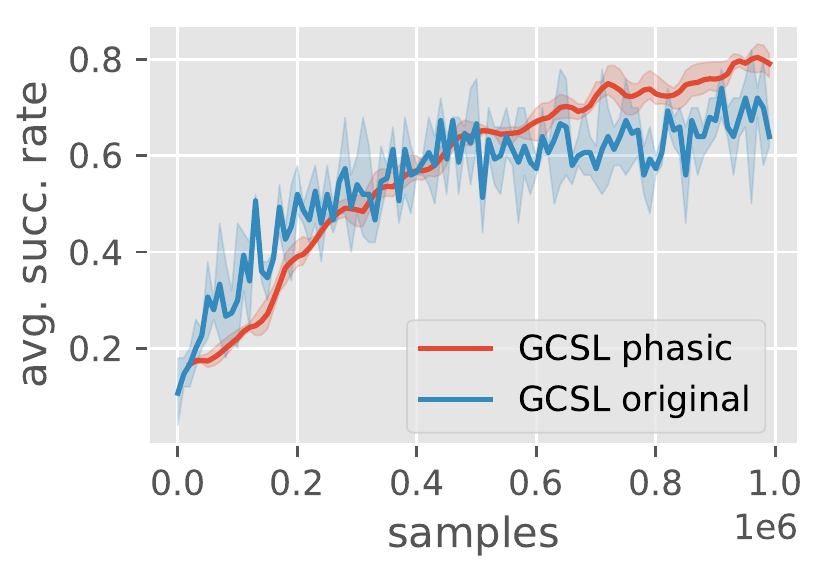}
    \caption{Comparison between phasic GCSL and the original version.}
    \label{fig:app:gcsl}
    \end{minipage}
    \hspace{2mm}
    \begin{minipage}{0.49\textwidth}
    \centering
    \includegraphics[width=0.6\linewidth]{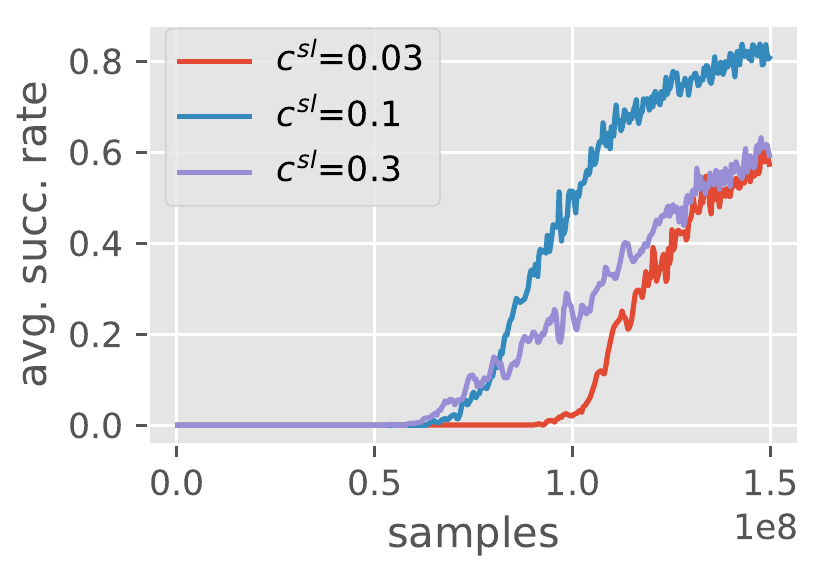}
    \caption{Success rate of joint RL+SL with different $c^{SL}$ in ``stack-6-cube'' task. We adopt $c^{SL}$ = 0.1 for SIR in the main paper.}
    \label{fig:app:csl}
    \end{minipage}
    \vspace{-3mm}
\end{figure}
\textbf{DT baseline implementation:}
The policy and value networks adopt the same GPT-2 architecture similar to the original version in \cite{chen2021decision}. They output actions and values conditioning on desired return, past states, and past actions. To avoid gradient explosion, only the last token of the predicted action or value sequence is used for updating the model. To stabilize training, we remove all dropout layers from the transformer model. We use a context length of 5 for sequence conditioning. The specific feature extractors for processing single-step observations in different domains are as follows. For ``Push'', we use MLPs of hidden size 300 and 300 for representation learning and we train separate policy and value networks. For ``Stacking'', we use a transformer-based architecture similar to {\name} for representation learning except that the size of hidden layer is 128. We find that using shared or separate feature extractors for policy and value networks leads to similar performances. Therefore we share parameters for them to save computational resources.

\subsection{Analysis on Phasic vs. Joint RL and SL Optimization}\label{sec:app:phasic}
{\name} decouples RL and SL objectives in two phases instead of optimizing them jointly ($L^{RL} + c^{SL}L^{SL}$), since the two objectives can largely interfere with each other and the choice of $c^{SL}$ is empirically brittle to tune. The RL objective is policy gradient over rollout data (Eqn.~\ref{eq:rl}), which requires (primarily) on-policy samples (both success and failures) to make policy improvement. The SL objective (Eqn.~\ref{eq:bc}) is performed over the successful dataset with both success-only rollout samples and off-policy augmentation trajectories. These two objectives operate on very distinct data distributions. If $c^{SL}$ is too large, the gradient will be pulled away from the policy improvement direction, which makes policy learning unstable or even breaks training. If $c^{SL}$ is too small, the objective may not sufficiently leverage the augmented successful data learning to slow convergence. We report sensitivity analysis in Fig.~\ref{fig:app:csl} by trying different $c^{SL}$ in the joint objective. 
With phasic training, RL and SL are decoupled and the interference is largely reduced. 
\yf{We can also intuitively interpret the benefit of decoupling RL and SL into two phases by echoing one result in stochastic optimization: smaller optimization step size should be taken when the gradient noise is large (Theorem 6.2 in ~\cite{bubeck15convex}). RL objective is with large gradient noise, while SL offers clear supervision signal. When jointly optimizing RL and SL, only small step size is allowed which leads to slow convergence; when optimizing RL and SL separately, different step sizes can be chosen for different objectives, thus converges faster.}

\end{document}